\newtheorem{theorem}{Theorem}
\theoremstyle{definition}
\newtheorem{defn}{Definition}
\newtheorem{conj}{Conjecture} 
\def\D{\mathbb{D}}
\def\K{\kappa}
\def\R{\lambda}
\def\T{\tau}
\def\MI{\textsf{MI}}
\def\PoL{\textsf{PoL}}
\def\PoR{\textsf{PoR}}
\def\DF{forging}
\def\ML{\textsf{ML}}
\def\PoL{\textsf{PoL}}
\def\calA{\mathcal{A}}
\def\calC{\mathcal{C}}
\def\GMI{\mathcal{G}_{MI}(\cdot)}
\def \m{M_\theta}
\def \mf{M_\theta'}
\def \calP{\mathcal{P}}
\def \x{x^*}
\def \B{B^{(t)}}
\def \Bf{B^{(t)}_\texttt{forge}}
\def \d{d_{\theta}}
\def \s{s_{\mathcal{A}}}
\def \M{\mu}
\def \c{c_{\mathcal{A}}^U}
\newcommand{\highlight}[1]{\textcolor{orange}{#1}}
\begin{document}

\title{Can Membership Inferencing be Refuted?}
\author{\IEEEauthorblockN{Zhifeng Kong\textsuperscript{\textsection}, Amrita Roy Chowdhury\textsuperscript{\textsection}, Kamalika Chaudhuri}
\IEEEauthorblockA{University of California, San Diego}}
\maketitle
\begingroup\renewcommand\thefootnote{\textsection}
\footnotetext{The first two authors have made equal contributions.}
\endgroup
\begin{abstract}
    Membership inference (\MI) attack is currently the most popular test for measuring privacy leakage in machine learning models. Given a machine learning model, a data point and some auxiliary information, the goal of an \MI~attack is to determine whether the data point was used to train the model. In this work, we study the reliability of membership inference attacks in practice. Specifically, we show that a model owner can plausibly refute the result of a membership inference test on a data point $x$ by constructing a \textit{proof of repudiation} that proves that the model was trained \textit{without} $x$. We design efficient algorithms to construct proofs of repudiation for all data points of the training dataset. Our empirical evaluation demonstrates the  practical feasibility of our algorithm by  constructing  proofs of repudiation for popular machine learning models on MNIST and CIFAR-10. Consequently, our results call for a re-evaluation of the implications of membership inference attacks in practice.
\end{abstract}

Modern, complex machine learning models  are known to leak sensitive training data~\cite{Papernot16}. Membership inference (\MI) attack~\cite{Shokri17} has emerged as the primary metric for measuring privacy leakage for machine learning models. For instance, government organizations
such as the ICO (UK) and NIST (US) currently highlight membership inference as a potential confidentiality violation
and privacy threat to the training data \cite{Murakonda20}. Given a model $\m$ trained on a dataset $D$, a target data point $\x$, and some auxiliary information, the goal of an \MI~attack is to predict whether the data point $\x$ was used to train the model $\m$. A large body of prior work \cite{carlini2021membership,Carlini22, Jayaraman21, Yeom18,MEntr,Watson2021} has developed effective \MI~attacks and  used them to characterize the privacy leakage for different models. Additionally, \MI~attacks have been deployed in practice for privacy audits -- for example, in the privacy testing library of TensorFlow \cite{blog}. 

Given its widespread popularity, analyzing  the reliability or well-posedness of a membership inference attack is of immense practical importance. Suppose an adversary carries out a successful membership inference attack on a data point $\x$ -- is this is sufficient to indisputably argue that $\x$ was indeed used to train the model $\m$? We take a closer look at this problem in this paper and ask the specific question:
\begin{displayquote} Can a model owner refute a membership inferencing claim in practice?\end{displayquote}

The main challenge here is how can a model owner plausibly assert such a repudiation. We propose to do this as follows. Given a model $\m$ trained by stochastic gradient descent (SGD) on a training dataset $D$ and a target data point $\x \in D$, we present a proof that it is  computationally feasible for the model owner to have generated the \textit{same model from a different dataset $D'=D\setminus \{\x\}$ that does not contain the data point $\x$}. We call this a \textit{Proof-of-Repudiation} (\PoR).  The proof of repudiation empowers a model owner to plausibly deny the membership inferencing prediction and present a counter claim that $\x$ is in fact \textit{not} a member. This discredits the predictions of an \MI~attack. Thus, an adversary cannot ``go-to-court" with the prediction of an \MI~attack.  

 For the concrete construction, we use a log that records the entire training trajectory (i.e., the sequence of SGD updates including all the relevant training details, such as mini-batch indices and model weights) starting from the initialization to the final model weights.  Any third-party entity can reproduce the steps of gradient descent from the log and verify the validity of the sequence provided. Specifically, the model owner produces a ``forged" log showing the model's training trajectory on a dataset $D' = D\setminus \{\x\}$ and uses it as a proof of repudiation for refuting membership inference claims on $\x$. This suffices to raise a \textit{reasonable doubt} on the adversary's claim. The above construction is based on the notion of \textit{forgeability} which was introduced by Thudi et al. \cite{Thudi22} in the context of machine unlearning (Section \ref{sec:background:forgeability}). An attractive aspect of our construction is that the generated \PoR~is completely \textit{agnostic} of the \MI~attack. Additionally, it requires \textit{no} modifications to the model's training and inference pipeline. 

In this paper, we aim to produce proofs of repudiation for all the data points of the training dataset. A naive approach is to repeat the construction by Thudi et al. \cite{Thudi22} on every data point. However, there are two challenges with this approach. First, this is extremely computationally inefficient especially for modern models trained on huge datasets. Second, their approach cannot accommodate modern sophisticated training techniques, such as data augmentation. 

We address both challenges through a series of algorithmic innovations. First, we provide a way to strategically reuse certain information -- ``forged" mini-batches as well as proofs of repudiation -- across multiple training data points, which improves efficiency by three orders of magnitude.  Second, we extend support for a popular data augmentation technique known as the Random Flip augmentation by using additional transformation information. Combining these techniques gives us a practical proof-of-repudiation generation algorithm that can be run on popular models on standard datasets, such as MNIST and CIFAR-10, even with academic computational resources. 

We empirically evaluate our algorithm on three popular models trained on two standard datasets -- MNIST and CIFAR-10. Our results indicate that it is possible to generate valid proofs of repudiation for at least $98.8\%$ and $87.8\%$ of the data points of MNIST and CIFAR-10, respectively. We measure the quality of the proofs by evaluating them against five standard \MI~attacks, including state-of-the-art attacks such as LiRA~\cite{Carlini22} and EnhancedMIA~\cite{ye2021enhanced}, which are specifically designed to maximize prediction accuracy under a \textit{low} false positive regime. We see that our algorithm can fool even these attacks, and performs remarkably well in practice.  
Additionally, we analyze the limitations of proofs of repudiation and observe that one cannot generate a valid proof for outlier/out-of-distribution data points. We provide a formal impossibility result for linear models (Theorem \ref{thm:impossibility}). 

Since proofs of repudiation enable a model owner to refute a membership inference claim \textit{post-attack}, our results question the reliability of \MI~attacks in practice, and call for a re-evaluation of how they are used in measuring the privacy leakage of \ML~models. We argue that \MI~attacks are perhaps better suited for distinguishing between in- and out-of-distribution data points in practice (see Sections \ref{sec:PoR:impossibility} and \ref{sec:discussion}). Making reliable inferencing on in-distribution data points would require a new lens of investigation.

\section{Background}\label{sec:background}

\subsection{Machine Learning}
Machine learning (\ML) is the task of learning model $M_{\theta}$ from a dataset where $\theta$ denotes the parameters of the model. Here, we focus on supervised learning where the dataset $D$ consists
of points $(x_i, y_i) \in \mathcal{X}\times \{1,\cdots, c\}$, where $y_i \in \{1,\cdots , c\}$ is the
label of the input $x_i \in D$, and there are $c$ possible labels. The
goal of supervised ML is to predict the label $y$ of an unlabelled
input $x$ by using the knowledge learned from the labeled dataset $D$.  Let $L$ be the loss function.  Let $g$ be the update rule that takes the model $\theta_t$ and a mini-batch $(\hat{x}^{(t)}, \hat{y}^{(t)})$ of size $b$ at step $t$ as inputs, and updates the model parameter as
$\theta_{t+1} = g(\theta_t,(\hat{x}^{(t)}, \hat{y}^{(t)}))$.  In this paper, we focus on the SGD update rule. At each $t$, $(\hat{x}^{(t)}, \hat{y}^{(t)})$ is uniformly selected from $D$ without replacement, and the update rule is 
\begin{gather}\theta_{t+1} = \theta_t-\texttt{step\_size}\cdot\frac{1}{b}\sum_{i=1}^b
\left.\frac{\partial L(M_{\theta}(\hat{x}^{(t)}_i),\hat{y}^{(t)}_i)}{\partial\theta}\right\vert_{\theta=\theta_t}.
\end{gather}
For conciseness, in the rest of the paper, we drop the notation $y$ for labels and represent a sample by $x$, and let $\hat{x}\in D$ denote a mini-batch of size $b$ sampled from $D$. 

\subsection{Membership Inference Attacks}\label{sec:background:MI}
The goal of a membership inference (\MI) attack ~\cite{Shokri17,carlini2021membership,Carlini22, Jayaraman21, Yeom18,MEntr,Watson2021} is to predict whether a data point was used in training a given model. \MI~attacks are currently the most widely deployed attack for auditing privacy of machine learning models. We consider a model owner who has trained a \ML~model $\m$ on a dataset $D$. Both the model $\m$ and the dataset $D$ is proprietary to the model owner. 
\par Prior work~\cite{Carlini22, Jayaraman21, Yeom18} has made attempts at formalizing the \MI~attack along the lines of a security game which is inspired by cryptography. The attack  takes the model $\m$, a data point drawn from the input distribution $\x\sim \D$, possibly some auxiliary knowledge $\psi$ (for instance, query access to the distribution $\mathbb{D}$) and outputs its prediction bit $b\leftarrow \mathcal{A}(\x,\m,\psi)$ indicating its belief about $\x$'s membership.  The security games are played between two parties -- the challenger $\mathcal{C}$ and the adversary $\mathcal{A}$. For the \MI~attack, the model owner acts as the challenger and the security game, $\GMI$, is defined as follows:
\begin{defn}[Membership Inference Security Game $(\mathcal{G}_{\MI}(\cdot))$] The membership inference attack is defined as follows: 
\begin{enumerate}\item  
The challenger $\calC$ samples a dataset $D\sim \mathbb{D}$ via a \textit{fixed} random seed, $S_D$, and trains a model $M_{\theta}\leftarrow\mathcal{T}(D)$ on it by using a fixed random seed, $S_\mathcal{T}$, in the training algorithm $\mathcal{T}(\cdot)$. \item The challenger $\calC$ flips a bit $\mathtt{b}$, and if $\mathtt{b} = 0$, samples a
fresh challenge point from the distribution $\x \sim \mathbb{D}$
(such that $\x \not \in D$). Otherwise, the challenger selects
a data point from the training set $\x \leftarrow D$. The challenger uses a fresh random seed $S_{x^*}$ for both the cases.
\item The challenger $\calC$ sends $\x$ to the adversary $\calA$.
\item Let $\psi$ denote the adversary's auxiliary knowledge (for instance, query access to the distribution $\mathbb{D}$). The adversary
 outputs a bit $\mathtt{b}' \leftarrow \calA(\x,M_{\theta}, \psi)$.
\item Output $1$ if $\mathtt{b}' = \mathtt{b}$, and $0$ otherwise. \end{enumerate}
\end{defn}
In the above game, fixing the random seeds, $S_D$ and $S_\mathcal{T}$ imply that the challenger always trains the \textit{same} model $\m$ from the \textit{same} dataset $D$. We choose this particular setting because this captures the real-world scenario of a typical \ML~model post-deployment (setting 2 as discussed later). There can be a couple of variations of the above game based on the (fixed or fresh) instantiation of the random seeds, $S_D,S_{\mathcal{T}}$, and $S_{x^*}$ (see \cite{ye2021enhanced} for more details).

 Typically, \MI~attacks have been studied under the following two settings:

\textbf{Setting 1.} \MI~attacks are used for \textit{auditing} privacy internally \textit{pre-deployment}. Here, the model owner 
acts as both the challenger and the adversary in the game and carries out \MI~attacks to stress-test the model privacy for data protection impact assessment~\cite{GDPR}. The success rate of the \MI~attacks is used to assess the privacy leakage from $\m$ about its training dataset $D$.  

\textbf{Setting 2.} The second setting deals with practical threat that an \MI~attack poses for a \ML~model \textit{post deployment}. In the above game, the adversary here is a real-world attacker  who tries to infer sensitive membership information and act on it (see Section \ref{sec:discussion} for some illustrative examples). 

Our results have \textit{no} bearing on the first setting of using \MI~attacks for internal privacy audits pre-deployment.  For the rest of the paper, we are only interested in the second setting -- refuting membership inference claims of a real-world \MI~attack.

Note that we consider a white-box \MI~attack construction where we assume that the adversary $\calA$ has access to the full model $\m$ to capture the most general setting. However, restricting the adversary to only a black-box access to the model (for instance, in the case of ML-as-a-Service platforms) does not affect any of the discussion in this paper.

\subsection{Proof-Of-Learning (PoL)} \label{sec:background:PoL}
\begin{figure}\centering \includegraphics[width=\linewidth]{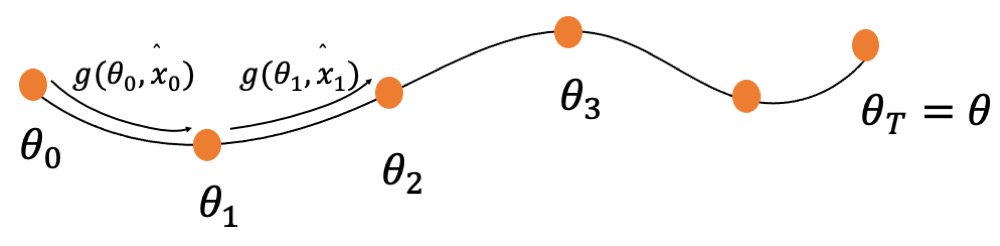}
\caption{PoL~records the training trajectory, i.e., sequence of SGD updates, of a model $\m$.}\label{fig:forgeability 1} \end{figure}

The concept of Proof-of-Learning (\PoL) was introduced by
Jia et al. \cite{Jia21}. It enables an entity to provide evidence that they have \begin{itemize}\item 
 performed the necessary computation on a dataset $D$ to train a model $\m$, and that \item all the steps have been performed correctly. \end{itemize}  Consequently, \PoL~allows the verification of the integrity of the
training procedure used to obtain $\m$ by a third-party authority, such as a regulator. It was proposed in the context of establishing model ownership, or verifiable outsourcing of model training.  The core idea is to maintain a log throughout training which facilitates
reproduction of the alleged computations the entity carried out. A valid \PoL~log is formally defined as follows:

\begin{defn}[Proof-of-Learning ~(\PoL)~\cite{Jia21}] Recall, $g(\theta,\hat{x}) = \theta'$ denotes
updating the model parameters $\theta$ to $\theta'$ where
$g$ is a training algorithm/update rule. A valid  Proof-of-Learning (\PoL) log, $(g, d, \epsilon)$ is a  sequence of $\{(\theta_i
,\hat{x}_i)\}_{i\in J}$ for some countable indexing set $J$, such that $\forall i \in J,  d(\theta_{i+1},g(\theta_i
,\hat{x}_i)) \leq \epsilon$ for
some training function $g$ and metric $d$ on the parameter space.
The threshold $\epsilon$ is a tolerance parameter for the verification. \end{defn}
From the above definition, the \PoL~documents
intermediate checkpoints of the model, data points used, and
any other information required for the updates during training
(for instance, hyperparameters that define the update rule  at each
step).  The verifier checks a \PoL’s validity by reproducing the $t$-th intermediate checkpoint of the
model based on the information given in the log, including
the $(t - 1)$-th checkpoint, data points used at this step, and
the same update rule as defined in the log. Then the verifier computes
the distance between the $t$-th checkpoint in the log and the reproduced $t$-th checkpoint in the parameter space. This distance is called the verification error, and we say this update
is valid if the verification error is below a certain threshold. For the rest of the paper, we use the $\ell_2$-distance as our distance metric $d$. %The threshold can be chosen based on the observed noise/randomness seen when reproducing computations ondifferent hardware (which comes from the randomness inback-end floating point computations). 

\subsection{Forgeability} \label{sec:background:forgeability}

\begin{figure}\centering \includegraphics[width=\linewidth]{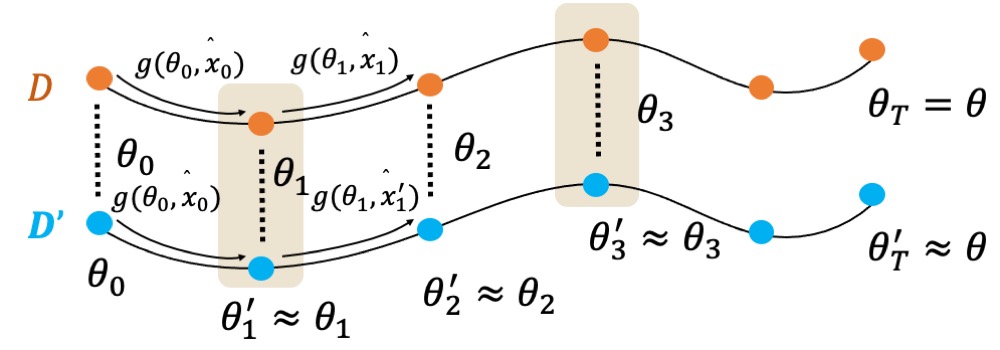}
\caption{Forgeability allows replacing data points from $D$ by points from $D'$ while keeping the model updates computed on these points (almost) identical
.}\label{fig:forgeability 2} \end{figure}

Privacy regulations such as GDPR~\cite{GDPR} and CCPA~\cite{CCPA} empower individuals with the Right to Deletion or Right to be Forgotten wherein a data owner who had previously allowed the use of their personal data can later retract the authorization.   In the context of \ML~models,  this requires the model owner to perform \textit{machine unlearning}~\cite{Bourtoule21} to scrub the effect of the data point from the model.  Machine unlearning is the mechanism of proving to the data owner that the model, which had originally been trained on their data, has been modified to ``forget" whatever it might have learned from the data.  

Forgeability~\cite{Thudi22} was introduced in the context of machine unlearning. In what follows, we present a simplified description of the original proposal which is relevant for understanding our paper. 

\par Two datasets are defined to be forgeable w.r.t a given model if the same model parameters (up to some
small per-step error) can \textit{feasibly} be obtained by using either of the datasets. Here, feasibility is defined in terms of the \PoL~logs.

\begin{defn}[Forgeability~\cite{Thudi22}]  For two datasets $D$ and
$D'$, and \textsf{PoL}s stemming from the same model $\m$, we say $D'$ forges $D$ (with error $\epsilon'$), if we have
\begin{gather} \forall\ i, \exists\ \hat{x}'_i \in D', s.t.
\\ d(\theta_{i+1},g(\theta_i
,\hat{x}'_i)) \leq \epsilon' .
\end{gather}\label{def:forgeability}
\end{defn}
Intuitively, forgeability means that the data points (mini-batches) of $D$ can be mapped to data points of $D'$ such that the parameter space is preserved (upto $\epsilon$) as determined by a \PoL~log. Thus, we can replace  the data points in $D$ with that in $D'$ and still obtain a valid \PoL~for the same model $\m$ (up to an error of $\epsilon'$).  Thus, forgeability is the ability to replace data points from one
dataset by data points from another dataset while keeping the
model updates computed on these data points (almost) identical.

\textbf{Consequences for Machine Unlearning.} In the context of machine unlearning, the authors present an algorithm (Algorithm \ref{alg: forging attack}) to generate \PoL~logs for carrying out the forging attack. The algorithm forges a dataset $D$ with its subset dataset $D\setminus \{x_-\}$ where $x_-\in D$ is the data point to be deleted. For this, the algorithm randomly samples $\M$ number of candidate mini-batches that do not contain $x_-$ (Step 4-5). Then, it chooses the one that minimizes the $\ell_2$-distance between gradients computed from candidate and original mini-batches (Step 6-8). We call this the forged mini-batch, $B^{(t)}_{\texttt{forge}}$ and this step is repeated for every training iteration $t\in [\T]$. All the $\T$ forged mini-batches constitute the forged $\PoL$.
\begin{algorithm*}
    \centering
    \caption{Generation of forged PoL~(\cite{Thudi22})}
    \label{alg: forging attack}
    \begin{algorithmic}[1]
        \State \textbf{Inputs:} $D=\{x_1,\cdots,x_n\}$ - Training dataset;  $\T$ - Total number of iterations; $B_*^{(t)}$ - Training mini-batches  for the $t$-th iteration where $t\in [\T]$; $b$ - Size of each mini-batch, $\left|B_*^{(t)}\right|$; $\theta^{(0)}$ - Initialization parameters; $x_-$ - Data point to be deleted.
        \State \textbf{Hyper-parameters:} $\texttt{step\_size}$;
        $\M$ - Number of candidate mini-batches sampled per iteration for forging.
        \For{$t=1,\cdots,\T$}
            \State Randomly select a subset $D_{\mathrm{sub}}$ from $D\setminus \{x_-\}$.
            \State Randomly select $\M$ mini-batches of size $b$ from $D_{\mathrm{sub}}$: $B_1^{(t)},\cdots,B_\M^{(t)}$.
            \State Compute gradients and update the original model: \\
            $$\texttt{grad}^{(t)} = \frac1b\nabla_{\theta} \sum_{x\in B_*^{(t)}}\ell(x;\theta^{(t)});~~\theta^{(t)}=\theta^{(t-1)}-\texttt{step\_size}\cdot\texttt{grad}^{(t)}.$$
            \State Choose the best mini-batch to approximate the gradient: \\
            $$m_*^{(t)} = \arg\min_{m\in [\M]} \left\|\frac1b\nabla_{\theta} \sum_{x\in B_m^{(t)}}\ell(x;\theta^{(t)})-\texttt{grad}^{(t)}\right\|_2^2.$$
            \State $B_{\texttt{forge}}^{(t)} = B_{m_*^{(t)}}^{(t)}$.
        \EndFor
        \State \textbf{Return:} original model $\theta_*=\theta^{(\T)}$; all forged mini-batches $\{B_{\texttt{forge}}^{(t)}: t\in [\T]\}$.
    \end{algorithmic}
\end{algorithm*}

\section{Proof-of-Repudiation (PoR)}\label{sec:PoR:Formalization}
In this section, we formalize our problem statement. 
\par\textbf{Problem Statement.}  Consider a scenario where a real-world adversary carries out an \MI~attack on a model $\m$ trained on dataset $D$ and correctly predicts that a point $x^*\in D$ is indeed a member of the training dataset. This corresponds to the second setting as discussed in Section \ref{sec:background:MI}. Given the above scenario, the question we study here is that can the model owner now repudiate the adversary's membership inference claim on $\x$. We show that the ability to forge (Section \ref{sec:background:forgeability}) enables the model owner to  construct a \textit{Proof-of-Repudiation} (\PoR) which empowers them to \textit{repudiate} the prediction. The \PoR~enables the model owner to plausibly deny the \MI~prediction and present a counter claim that $\x$ is in fact \textit{not} a member. This discredits the predictions of an \MI~attack.

\subsection{Definition of a PoR} \label{sec:PoR:def}
Next, we provide the formal definition of \PoR.
%\arc{Rephrase - bullet}
\begin{defn}[Proof Of Repudiation (\PoR)] Let  $\m$ be a \ML~model trained on a dataset $D$. A valid proof-of-repudiation $\calP(\m,D,\x)$ for a membership inferencing claim on a data point $\x\in D$ is a certificate to the effect that it is computationally \textit{feasible} for the model owner to have obtained the \textit{same} model $\m$ from a \textit{different} dataset $D'$ such that $\x\not\in D'$.\label{def:PoR}
\end{defn}

From the above definition, given a model $\m$ and a membership inference claim that data point $\x$ is a member of its training dataset $D$, a valid \PoR~should provide evidence that the model owner could have obtained the model $\m$ from a different dataset $D'$ that does \textit{not} contain the data point $\x$. Clearly, a certificate to the above effect suffices to raise a \textit{reasonable doubt} on the membership inference claim on $\x$. In other words, the adversary now \textit{cannot} claim with absolute certainty that $\x$ was indeed 
a member of the training dataset $D$. 
Next, we describe a set of desirable properties of a \PoR.
\begin{itemize}
     \item \textbf{P1. Correctness.} The \PoR~should be verifiable with very high probability, i.e., the  reported computation (concerning the training of $\m$) should be reproducible by any third-party verifier.
    \item \textbf{P2. Attack Agnostic.} \PoR~should be completely agnostic of the \MI~attack used for making the prediction, i.e.,  the \textit{same} \PoR~should be valid for \textit{all} \MI~attacks as long as they all have the make the same prediction ($\mathtt{b}=1$) on $\x$.
    \item \textbf{P3. Model Agnostic.} \PoR~should be agnostic of specificities of the \ML~model, such as model architecture, number of parameters. 
      \item \textbf{P4. Proof Generation Efficiency.} A challenger $\calC$ should be able to generate the \PoR~efficiently. Specifically, not too much extra effort  than training the algorithm.
      \item  \textbf{P5. Proof Verification Efficiency. } The \PoR~verification should be efficient and ideally less computationally expensive than generating the \PoR. Additionally, verification
should succeed even on a different hardware than the one used for generation.

\end{itemize}

\section{How to construct a PoR?}\label{sec:PoR:construction}
%\arc{Notations}
In this section, we describe the construction of a \PoR. For this, first we introduce a practical relaxation of a \PoR~followed by its construction leveraging forgeability. \par %\arc{Tighten this}

For the rest of the paper, we consider the dataset $D'=D\setminus \{\x\}$, i.e., the original dataset with only the data point in contention, $\x$, removed. As indicated in Definition \ref{def:PoR}, we can conclude that an \MI~attack is infeasible for an input data point $x^* \in D$  if we can \textit{prove} that both the datasets $D$ and $D'$ can generate the exact \textit{same} model, $\m$.  However here, we relax this assumption and allow the two models --  the one obtained from training on $D$, $\m$, and the other from $D'$, $\mf$ to differ by a small amount $\epsilon^*$. Specifically, we mean that the parameters of the two models, $\m$ and $\mf$ can differ from each other by at most $\epsilon^*$ -- we denote this by $||\theta-\theta'||_2\leq \epsilon^*$. Based on this relaxation we define the notion of \textit{membership inference equivalence} with respect to an \MI~attack as follows. Recall, that the output of an attack is a bit, $\calA(x,\m,\psi)=b$, indicating the adversary's belief on whether the point belongs to the dataset.
\begin{defn}[Membership Inference Equivalence] For a given distribution $\D$, two models $\{\m,\mf\}$ are defined to be functionally equivalent w.r.t to an \MI~attack (adversary), $\calA$ iff, \begin{gather}\forall x \in \D, \calA(x,\m,\psi)=\calA(x,\mf,\psi).\end{gather}\label{def:MI:equivalence}\end{defn}
From the above definition, two models are membership inference equivalent equivalent w.r.t an \MI~attack if the \MI~attack's prediction remains the \textit{same} on both the models for all the points in the data distribution $\D$. In other words, the two models are equivalent in terms of the functionality of the attack. We argue that this relaxation is justified because in practice, a membership inferencing is defined by the attack functionality. One can consider this relaxation to be analogous to the assumption of computationally bounded adversaries in cryptography since in practice, adversaries are always computationally bounded. Similar assumptions have also been been made in differential privacy -- Chaudhuri et al.~\cite{Chaudhuri19} proposed capacity-bounded differential privacy where the
adversary that distinguishes the output distributions is assumed to be  bounded in terms of the function
class from which their attack algorithm is drawn. We refer to $\mf$ as the forged model in the rest of the paper. Based on this, we propose the following conjecture.

\begin{conj} For some small value of $\epsilon'$, if $||\theta-\theta'||_2\leq \epsilon'$, then the two models $\{\m,\mf\}$ are membership equivalent w.r.t an \MI~attack. \label{conj:1}\end{conj}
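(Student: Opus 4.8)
The plan is to treat every \MI~attack as a two-stage procedure: it first computes a real-valued membership score and then thresholds it. Concretely, for essentially all attacks in the literature (loss-based, confidence-based, and the calibrated likelihood-ratio tests used by LiRA and EnhancedMIA), the decision bit can be written as $\calA(x,\m,\psi) = \mathbf{1}[s(x,\m,\psi) \geq \tau]$, where $s$ is a scalar score derived from the model's outputs on $x$ (e.g.\ the loss, the softmax confidence, or a calibrated log-likelihood ratio) and $\tau$ is a decision threshold fixed by the auxiliary knowledge $\psi$. Under this view, membership-inference equivalence in the sense of Definition \ref{def:MI:equivalence} follows whenever the parameter perturbation $\theta \mapsto \theta'$ fails to move the score of any point across its threshold.

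First I would establish a stability bound on the score. Assuming the model map $\theta \mapsto \m(x)$ is $L_M$-Lipschitz in the parameters, uniformly over the relevant input domain, and that the scoring rule $s$ is $L_s$-Lipschitz in the model's output — both of which hold for the smooth losses and bounded-logit networks encountered in practice — I would chain the two estimates to obtain $|s(x,\m,\psi) - s(x,\mf,\psi)| \leq L_M L_s \|\theta - \theta'\|_2 \leq L_M L_s\,\epsilon'$ for every $x$. Thus a parameter gap of $\epsilon'$ perturbs the score of each point by at most a fixed constant multiple of $\epsilon'$. Next I would use this bound to certify the bit: for any point whose score lies at distance strictly greater than $L_M L_s\,\epsilon'$ from the threshold $\tau$, the perturbed score stays on the same side of $\tau$, so $\calA(x,\m,\psi) = \calA(x,\mf,\psi)$. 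Choosing $\epsilon'$ small enough that $L_M L_s\,\epsilon'$ falls below the smallest score margin yields equivalence on all such points.

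The hard part is the universal quantifier ``$\forall x \in \D$'' in Definition \ref{def:MI:equivalence}. A point whose score sits within the thin band $[\tau - L_M L_s\,\epsilon',\ \tau + L_M L_s\,\epsilon']$ can have its bit flipped by an arbitrarily small perturbation, so the statement cannot hold as a clean theorem without excluding these boundary points — which is precisely why it is posed as a conjecture rather than established as a lemma. I would therefore weaken the claim to a probabilistic version: if the pushforward density of the score under $x \sim \D$ is bounded near $\tau$, then the boundary band has measure $O(\epsilon')$, so $\calA(\cdot,\m,\psi)$ and $\calA(\cdot,\mf,\psi)$ agree except on a set of probability $O(\epsilon')$, which vanishes as $\epsilon' \to 0$.

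The remaining technical obstacles are exactly the two assumptions that the boundary argument leans on. Justifying uniform Lipschitz control of $\theta \mapsto \m(x)$ for modern non-smooth architectures (e.g.\ deep \textsf{ReLU} networks, where the constant can in principle blow up along the training trajectory) requires either restricting to a compact parameter region visited during training or invoking empirical Lipschitz estimates. And ruling out score mass concentrated exactly at the attack's decision threshold is attack-specific: for adaptively calibrated attacks the threshold itself may depend on the model, so one must argue that the induced shift in $\tau$ is also $O(\epsilon')$ and fold it into the same margin accounting. I expect the Lipschitz-constant control to be the dominant difficulty, since it is what ultimately decides how small $\epsilon'$ must be for the forged model $\mf$ to be indistinguishable to any fixed \MI~attack.
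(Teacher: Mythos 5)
There is no proof in the paper to compare against: the statement is posed explicitly as Conjecture~\ref{conj:1}, and the paper's only support for it is the one-sentence intuition that nearby parameters yield nearby \MI~scores and hence the same prediction. Your sketch is a faithful formalization of exactly that intuition (score-plus-threshold decomposition, Lipschitz chaining through the model and the scoring rule), so as a reconstruction of what the authors likely have in mind it is on target.

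However, your proposal does not close the conjecture, and you have in effect rediscovered why it is a conjecture rather than a lemma. Definition~\ref{def:MI:equivalence} demands $\calA(x,\m,\psi)=\calA(x,\mf,\psi)$ for \emph{all} $x\in\D$, and your own boundary-band analysis shows the Lipschitz argument cannot deliver that universal quantifier: any point whose score lies within $L_M L_s\,\epsilon'$ of the threshold can flip. Your retreat to ``agreement except on a set of measure $O(\epsilon')$'' proves a genuinely different, weaker statement than the one in the conjecture, and no choice of ``small $\epsilon'$'' repairs this unless the score distribution has an actual gap around $\tau$ --- an assumption that is false for the calibrated attacks the paper evaluates, whose thresholds (maximum-accuracy or fixed-FPR) are by construction placed where score mass is present on both sides. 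Two further points deserve more care than your sketch gives them: (i) for attacks such as LiRA and EnhancedMIA the threshold and the calibration statistics are themselves functions of the deployed model, so the perturbation moves $\tau$ as well as $s$, and you must bound both shifts jointly rather than folding one into the other by assertion; and (ii) the uniform Lipschitz constant $L_M$ over the parameter region actually visited by SGD for deep \textsf{ReLU} networks is not something you can assume --- it is the quantity that determines how small $\epsilon'$ must be, and without controlling it the conclusion ``for some small value of $\epsilon'$'' is vacuous. In short: your argument is the right scaffolding for a probabilistic variant of the conjecture, but the statement as written remains open, consistent with the paper's decision to label it a conjecture and support it only empirically.
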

The above conjecture states that if two models are sufficiently close to each other in the parameter space, then they are functionally equivalent w.r.t \MI~attacks. The intuitive reasoning behind this is that models that are very close to each other in the parameter space result in very similar scores for an \MI~attack leading to the same prediction.

\par 
Based on the above relaxation, the \PoR~now should be able to prove that it is computationally feasible for the model owner to have generated \textit{almost the same} model, $\mf$, from a different dataset $D'$. Now recall that a \PoL~(Section \ref{sec:background:PoL}) for any model $\m$ trained on any dataset $D$ logs its complete training trajectory, (i.e., sequence of SGD model updates). \begin{comment}Specifically for every update, the \PoL~records \begin{enumerate} 
\item all model specific parameters, i.e., model weights
\item all the data points used 
\item all metadata associated with the training including
hyperparameters and optimizer specifications, such as
learning rate schedules, loss function.
\end{enumerate}  \end{comment} Given the \PoL, one can replicate the entire training trajectory starting from the initialization state to the final model weights, $\theta$, and verify its correctness  (up to some tolerance parameter $\epsilon$). 
Reproducing the alleged computation is synonymous to
showing its plausibility. This clearly establishes the \textit{computational feasibility} of obtaining $\m$ from the dataset $D$.  Hence, for the rest of paper we use \textsf{PoL}s to establish computational feasibility.
\par Now note that the model owner (challenger) who has trained $\m$ on the dataset $D$ automatically has access to the \PoL~for the original dataset $D$.  Based on our above discussion, the construction of a \PoR~for data point $\x$ reduces to generating a \PoL~for the model $\mf$ for the dataset $D'= D \setminus \{\x\}$. This is exactly what is enabled by forgeability (Section \ref{sec:background}). Specifically, forgeability allows the model owner to create a forged \PoL~for $\langle \mf, D'  \rangle$ from the original \PoL~for $\langle \m, D\rangle$ such that $||\theta-\theta'||_2\leq \epsilon'$ for some tolerance parameter $\epsilon'$. 
Thus, the model owner can re-purpose a forged \PoL~for $\langle \mf, D\setminus \{\x\}\rangle$ as a \PoR~for refuting a membership inference claim on $\x$. This completes our construction of a valid \PoR.

\begin{comment}\begin{theorem} If the \PoR~generator has freedom over the minibatch size $b$ and access to $\D$ such that they can
sample from $\D$ (where any point in $\D$ has $0$ probability) indefinitely, then for every $\epsilon > 0$, for almost every dataset $D$, there exist data points in $D$ such that the \PoR~generator can generate valid \PoR~logs to repudiate membership inferencing on these data points. \label{thm:PoR}\end{theorem}
\begin{proof}The proof follows directly from Thm. \ref{thm:1}.\end{proof}

\end{comment}

\subsection{Algorithm for PoR~Generation}\label{sec:PoR:algorithm}
In this section, we describe the algorithm to construct {\PoR}s~in practice. Let $D=\{x_1,\cdots,x_n\}$ be the set of training data. Let the original model be trained with $\T$ iterations, where $B^{(t)}_*$ is the mini-batch used for the $t$-th iteration, $t\in [\T]$. The goal is to generate $n$ {\PoR}s~(i.e., forged {\PoL}s) from $D'_{-i}=D\setminus \{x_i\}$ for all $i\in [n]$. Recall that a forged \PoL~from $D'_{-i}$ is a sequence of mini-batches $\{B^{(t)}_\texttt{forge}(x_i)\}$ such that

\begin{itemize}
\item $x_i\notin B^{(t)}_\texttt{forge}(x_i)$, and 
\item For each iteration $t$, the gradient of the model using $B^{(t)}_\texttt{forge}(x_i)$ closely approximates the gradient computed from the original mini-batch $B^{(t)}_*$.
\end{itemize} 

The naive strategy is to run Algorithm \ref{alg: forging attack} \cite{Thudi22} iteratively and generate $n$ forged \textsf{PoL}s. However, it has the following limitations:

\begin{itemize}
\item \textbf{Computation cost.} It needs to compute $n\cdot\M\cdot\T$ gradients in total (Step 3-9 in Algorithm \ref{alg: forging attack}), where $\M$ is the number of candidate mini-batches (see Section \ref{sec:background:forgeability}). This is computationally expensive when $n$ is large.
\item \textbf{Limited applicability.} The proposed algorithm works only for the simple update rules and cannot accommodate more complex training techniques such as data augmentation. 
\end{itemize} 

To address these challenges, we propose an improved two-phase mechanism to generate the {\PoR}s. In the first phase, we compute and store the forged mini-batches $B^{(t)}_\texttt{forge}(x_i)$ for all data points $i\in [n]$ and  iterations $t\in [\T]$ using . We propose an efficient algorithm (Algorithm \ref{alg: proposed}) to do this, and it is able to handle more complex training techniques. 

The second phase is to reconstruct the \PoR~from stored information \textit{on-the-fly}. This step is done when the model owner is challenged with a membership inference claim from an attacker post-deployment. 
With the two-step mechanism we can avoid saving the entire {\PoR}s (including all details, such as model parameters for every iteration) during the training phase, which requires prohibitively large storage costs and I/O burden especially for modern, big \ML~models. 

Our mechanism is based on three key ideas as follows:\\

\textbf{Key Idea 1: Re-using Forged Mini-batches.}
To reduce the overhead of expensive gradient computation, we propose to \textit{re-use forged mini-batches} across {\PoR}s. For each iteration $t \in [\T]$, we randomly split the training dataset into $\K$ subsets, $D_1^{(t)},\cdots,D_\K^{(t)}$, where $\K$ is a hyper-parameter with a small value (for instance, $\K=5$) independent of $n$. 
%\footnote{Using the same random split at all the iterations is equivalent to a forging attack where a subset of points are deleted instead of a single data point -- this degrades the quality of the forged \PoL~(increases the tolerance parameter $\epsilon$).} 
%
These subsets are equi-sized and disjoint: 
\begin{equation}\label{eq: rand split condition 1}
    |D_k^{(t)}| = n/\K; D_k^{(t)} \cap D_{k'}^{(t)} = \emptyset \text{ when } 1\leq k\neq k'\leq \K.
\end{equation}
For the $t$-th iteration, we randomly sample $\M$ candidate mini-batches from $D\setminus D_k^{(t)}$ and store the one that best approximates the original gradient. For all $x_i\in D_k^{(t)}$, we let choose this as the forged mini-batch $B^{(t)}_\texttt{forge}(x_i)$. In this way, the number of gradient computations is reduced to $\K\cdot\M\cdot\T$.

\textbf{Key Idea 2: Combining {\PoR}s.}  
The second step in our mechanism is to reconstruct all $n$ {\PoR}s and thus requires $n\cdot\T$ gradient computations. In order to further reduce the computation overhead, we partition the training data into groups of size $\R$. For each group $\hat{X}_i, i \in [n/\R]$, we compute the \PoR~from $D\setminus \hat{X}_i$. The computed \PoR~can thus work for \textit{all} $\R$ data points in $\hat{X}_i$. To achieve this, we require for any $ i \in [n/\R]$ at any step $t\in [\T]$, the $\R$ samples 
\begin{equation}\label{eq: rand split condition 2}
    %\{x_{i\R+1},\cdots,x_{(i+1)\R}\}\subset D_k^{(t)}
    \hat{X}_i \subset D_k^{(t)}.
\end{equation}

for some $k \in [\kappa]$. This reduces the number of \textsf{PoR}s to be generated to $n/\R$, and therefore reduces the number of gradient computes in the reconstruction phase to $n\cdot\T/\R$.

\textbf{Key Idea 3: Extension to Practical Techniques.} 
Data augmentation is frequently used in practice during model training for improving the model accuracy and generalization. This constitutes augmenting with original training dataset with data points with random transformations. To extend our algorithm to models trained with data augmentation, we let training mini-batches $B_*^{(t)}$ to contain not only the indices but also information about the specific transformations applied. After selecting the $\M$ candidate mini-batches in our algorithm, we apply the same data augmentation to these mini-batches and proceed. Specifically, we consider the common \textit{random horizontal flip} augmentation, which randomly flips an image with $0.5$ probability. We store a binary flag along with each sample index to indicate whether the corresponding sample is flipped. 

In addition to data augmentation, certain modifications to the SGD update rule can also boost model performance in practice. Three common techniques are: momentum, weight decay, and learning rate scheduling. We extend our algorithm to support these techniques as follows. $(1)$ Momentum in SGD means the update rule not only considers the current gradients but also those in the previous step. We do not modify the way to generate the \PoR~because we assume the gradients of forged mini-batches in the previous step are close to the true gradients in the previous step. We use the new update rule when reconstructing forged models. $(2)$ Weight decay means there is an $\ell_2$ regularization term of the model parameters in the loss function. We use the new loss function to compute gradients of the original and forged mini-batches. $(3)$ Learning rate scheduling means the learning rate $\mathrm{step\_size}$ changes during training. We use the corresponding learning rates when reconstructing the forged models. 

%We demonstrate details in Algorithm \ref{alg: proposed}.

\textbf{Summary.} The construction of our mechanism is outlined in Algorithm \ref{alg: proposed}, where \highlight{orange lines} refer to modifications we make to the original Algorithm \ref{alg: forging attack}. The full algorithm that includes all practical techniques is presented in Algorithm \ref{alg: complete} in Appendix \ref{app:algorithm}.
The total number of gradient computation in our algorithm is $\K\cdot\M\cdot\T+n\cdot\T/\R$. It is much more efficient than directly applying Algorithm \ref{alg: forging attack} to all training samples, which requires $n\cdot\M\cdot\T$ gradient computes. \footnote{For example, by taking $n=5K, \M=200, \K=5, \R=10$, our algorithm is $1667\times$ faster.}

\textbf{Discussion.} Here we discuss how the proposed \PoR~construction meets the desiderata listed in Section \ref{sec:PoR:def}. At the core of a \PoR~is a log of the training trajectory which can be easily replicated and verified (upto a tolerance parameter) which satisfies \textbf{P1}. \textbf{P2} is met since the proposed construction of the \PoR~is completely attack agnostic. Given a target data point $\x$, the same \PoR~works for \textit{any} \MI~attack -- even one with very a high confidence or low false positive rate (see Section \ref{sec:practice}). The \PoR~generation introduces no modifications \footnote{A \PoR~can be generated on-the-fly as long as the model owner maintains a log of the gradient descent update trajectory during training. We propose Algorithm \ref{alg: proposed} \ref{alg: proposed} for pre-computing the forged mini-batches only to reduce computational cost of generating the \textsf{
PoR}s for all the data points in $D$.} to the training pipeline thereby satisfying \textbf{P3}. A model owner who trains the model would have access to its training trajectory by default, enabling them to construct a \PoR~efficiently.  Additionally, our proposed Algorithm \ref{alg: proposed} shows how we can piggyback the \PoR~generation process with the model training for efficient generation of \textsf{PoR}s for all the data points. This satisfies condition \textbf{P4}. For  \textbf{P5},  instead of recomputing the gradient for \textit{every} update step recorded in the log, the verifier can check the correctness of only a \textit{subset} of the update steps. This allows the verifier to trade-off the confidence of verification with its computational cost -- greater the number of  updates steps checked, higher is the confidence in the proof's validity. As suggested in \cite{Jia21}, a heuristic could be to check the validity of only the largest model updates. The tolerance parameter can be set up to account for the errors incurred due to the usage of different hardwares. %The rationale behind this heuristic is that valid updates tend to have small magnitude (to avoid overshooting during gradient descent).  The involves generating . The tolerance parameter can be set up to account for the already has access t. Hence, the can little additional cost - ilog its while . Additionally, our we  

\begin{algorithm*}
    \centering
    \caption{Generation of PoRs (simplified; see complete algorithm in Appendix \ref{app:algorithm})}
    \label{alg: proposed}
    \begin{algorithmic}[1]
        \State \textbf{Inputs:} $D=\{x_1,\cdots,x_n\}$ - Training dataset;  $\T$ - Total number of iterations; $B_*^{(t)}$ - Training mini-batches  for the $t$-th iteration where $t\in [\T]$; $b$ - Size of each mini-batch, $\left|B_*^{(t)}\right|$; $\theta^{(0)}$ - Initialization parameters.
        \State \textbf{Hyper-parameters:} $\texttt{step\_size}$; $\K$ - Number of splits; $\M$ - Number of candidate mini-batches sampled every iteration for forging.
        \For{$t=1,\cdots,\T$}
            \State Compute gradients and update the original model: \\
            $$\texttt{grad}^{(t)} = \frac1b\nabla_{\theta} \sum_{x\in B_*^{(t)}}\ell(x;\theta^{(t)});~~\theta^{(t)}=\theta^{(t-1)}-\texttt{step\_size}\cdot\texttt{grad}^{(t)}.$$
            \State \highlight{Randomly split $D=D_1^{(t)}\cup\cdots\cup D_\K^{(t)}$ that satisfy Eqs. \eqref{eq: rand split condition 1} and \eqref{eq: rand split condition 2}.}
            \For{$k=1,\cdots,\K$}
                \State Randomly select subset $D_{\mathrm{sub}}$ from $D\setminus D_k^{(t)}$.
                \State \highlight{Randomly select $\M$ mini-batches with size $b$ from $D_{\mathrm{sub}}$: $B_1^{(k,t)},\cdots B_\M^{(k,t)}$.}
                \State Choose the best mini-batch to approximate the gradient: \\
                $$m_*^{(k,t)} = \arg\min_{m \in [\M]} \left\|\frac1b\nabla_{\theta} \sum_{x\in B_m^{(k,t)}}\ell(x;\theta^{(t)})-\texttt{grad}^{(t)}\right\|_2^2.$$
                \highlight{
                    \For{$x_i\in D_k^{(t)}$}
                    \If{$x_i\in B_*^{(t)}$}
                        \State $B_{\texttt{forge}}^{(t)}(x_i) = B_{m_*^{(k,t)}}^{(k,t)}$;
                    \Else 
                        \State $B_{\texttt{forge}}^{(t)}(x_i) = B_*^{(t)}$.
                    \EndIf
                    \EndFor 
                }
            \EndFor
        \EndFor
        \State \textbf{Return:} original model $\theta_*=\theta^{(\T)}$; all forged mini-batches $\{B_{\texttt{forge}}^{(t)}(x_i):  i\in [n], t \in [\T]\}$.
    \end{algorithmic}
\end{algorithm*}

\subsection{Can membership inferencing be always refuted?}\label{sec:PoR:impossibility}
%Note that Theorem \ref{thm:PoR} only shows a theoretical feasibility for \textsf{PoR}s under the assumption that the \PoR~generator can indefinitely sample from the input distribution $\D$. In our concrete construction (Algorithm \ref{alg: proposed}), the model owner is restricted to only the points in the dataset $D\setminus \x$ to generate the \PoR. 

It is important to note that \textsf{PoR}s have limitations in practice -- we \textit{cannot} create a valid \PoR~for \textit{all} possible data points. 
We provide a formal impossibility result for a logistic regression model as follows.
\begin{theorem}
Let $f(x) = \mathrm{sigmoid}(w^{\top}x)$ with weight vector $w$. Let $\mathrm{loss}(\cdot)$ be the cross entropy loss in Logistic regression. Assume $x_2,\cdots,x_n$ all lie on a subspace $\Gamma$ and $x_1\notin \Gamma$. Let the batchsize of SGD be 1 and  $y_1\neq \mathrm{sigmoid}(w^{\top}x^*)$. Under the above condition, it is impossible to generate a valid \PoR~using Algorithm \ref{alg: proposed}.\label{thm:impossibility}
\end{theorem}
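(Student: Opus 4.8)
The plan is to exploit the special structure of the cross-entropy gradient in logistic regression: with batch size $1$, a step on a point $(x,y)$ updates the weights by
\[
\nabla_w \mathrm{loss}(x,y) = \bigl(\mathrm{sigmoid}(w^{\top}x) - y\bigr)\, x,
\]
which is always a scalar multiple of the input $x$. The first thing I would record from this is a closure property: since $\Gamma$ is a linear subspace and $x_2,\dots,x_n \in \Gamma$, every gradient produced by any of these points lies in $\Gamma$, and the same holds for any average of such gradients.

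Next I would isolate a decisive step. Because $\x = x_1 \in D$ and the batch size is $1$, there is at least one iteration $t$ whose original mini-batch is $B_*^{(t)} = \{x_1\}$, and the original gradient there is $\mathrm{grad}^{(t)} = c\,x_1$ with $c = \mathrm{sigmoid}(\theta^{(t)\top}x_1) - y_1$. The hypothesis $y_1 \neq \mathrm{sigmoid}(w^{\top}\x)$ is precisely the statement that this coefficient does not vanish, so $\mathrm{grad}^{(t)}$ is a \emph{nonzero} multiple of $x_1$ and, since $x_1 \notin \Gamma$, has a nonzero component orthogonal to $\Gamma$. On the other side, Algorithm \ref{alg: proposed} forms the forged mini-batch for $x_1$ by sampling from $D \setminus D_k^{(t)}$ with $x_1 \in D_k^{(t)}$; with $b = 1$ this forged batch is a single point drawn from $\{x_2,\dots,x_n\} \subseteq \Gamma$, so its gradient $\mathrm{grad}_{\mathrm{forge}}^{(t)}$ lies in $\Gamma$ no matter which of the $\M$ candidates is selected.

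The core of the argument is then a projection bound. Letting $Q$ denote the orthogonal projection onto $\Gamma^{\perp}$, I would write
\[
\bigl\|\mathrm{grad}^{(t)} - \mathrm{grad}_{\mathrm{forge}}^{(t)}\bigr\|_2 \;\geq\; \bigl\|Q\bigl(\mathrm{grad}^{(t)} - \mathrm{grad}_{\mathrm{forge}}^{(t)}\bigr)\bigr\|_2 \;=\; \bigl\|Q\,\mathrm{grad}^{(t)}\bigr\|_2 \;=\; |c|\cdot \mathrm{dist}(x_1,\Gamma) \;>\; 0,
\]
using $Q\,\mathrm{grad}_{\mathrm{forge}}^{(t)} = 0$. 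This lower bound is a fixed positive constant that is independent of the number of candidate mini-batches $\M$; hence for any verification tolerance $\epsilon'$ below it, the forged \PoL~fails at step $t$, and no valid \PoR~for $x_1$ can be produced by Algorithm \ref{alg: proposed}.

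The step I expect to be the main obstacle is arguing that the orthogonal component is genuinely \emph{irreducible} -- that enlarging $\M$ or cleverly selecting the candidate cannot help. This is exactly where the collinearity of the logistic gradient with its input is essential: every admissible replacement point lives in $\Gamma$, so the set of achievable forged gradients is confined to $\Gamma$ and can never reach the $\Gamma^{\perp}$-component that $x_1$ forces into $\mathrm{grad}^{(t)}$. I would also verify that the two nondegeneracy conditions feed in correctly, namely that $x_1 \notin \Gamma$ yields $\mathrm{dist}(x_1,\Gamma) > 0$ and that $y_1 \neq \mathrm{sigmoid}(w^{\top}\x)$ yields $c \neq 0$, so that the product giving the lower bound is strictly positive.
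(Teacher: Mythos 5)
Your proposal is correct and follows essentially the same route as the paper's proof: both exploit that the logistic gradient $(\mathrm{sigmoid}(w^{\top}x)-y)\,x$ is collinear with its input, so every candidate forged gradient is confined to $\Gamma$, and both arrive at the identical lower bound $\left|\mathrm{sigmoid}(w^{\top}x_1)-y_1\right|\cdot\mathbf{dist}(x_1,\Gamma)>0$ on the gradient mismatch (the paper writes it as a minimum over $cx$ with $x\in\Gamma$, you write it as the norm of the projection onto $\Gamma^{\perp}$ --- these are the same computation). Your write-up is in fact slightly more explicit than the paper's about why the bound is independent of $\M$ and about the iteration where $B_*^{(t)}=\{x_1\}$.
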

\begin{proof}
The gradient at $x_i$ is 
\[\nabla_w \mathrm{loss}(x_i) = \left(\mathrm{sigmoid}(w^{\top}x_i)-y_i\right)x_i.\]
Then, 
\begin{equation}
    \begin{array}{l}
        \displaystyle \min_{2\leq i\leq n} \left\|\nabla_w \mathrm{loss}(x_1) - \nabla_w \mathrm{loss}(x_i)\right\| \\ 
        \displaystyle \geq \min_{c\in\mathbb{R},x\in \Gamma} \left\|cx - \left(\mathrm{sigmoid}(w^{\top}x_1)-y_1\right)x_1\right\| \\
        \displaystyle \geq \left|\mathrm{sigmoid}(w^{\top}x_1)-y_1\right| \mathbf{dist}(x_1,\Gamma) > 0.
    \end{array}
\end{equation}
Thus, clearly no sample from $D\setminus\{x_1\}$ can produce the same gradient as $x^*$, thus violating the assumption of Lemma 4 in \cite{Thudi22} which is key for generating a \PoL.
\end{proof}
Another instance where one cannot refute a membership inferencing claim is the extreme case when the adversary knows all but one data point in $D$. % -- for linear models, the adversary can exactly re-construct the unknown data point $\x$. 
 We also cannot hope to create a valid \PoR~for an outlier/ out-of-distribution data point $\x$. It is because, recall that the key idea of generating the \PoR~(i.e., forged \PoL) is to replace a mini-batch $B$ involving $\x$ with another $B_{\texttt{forge}}$ such that $\x \not \in B_{\texttt{forge}}$ and the gradients of both are (almost) the same.  However, for outliers it might not be possible to find appropriate alternative mini-batches in the dataset $D\setminus \{\x\}$ such that the resulting gradients are sufficiently close. 

%\par\ML are known to 
\section{Can we construct valid PoRs in practice?}\label{sec:practice}

In this section, we empirically evaluate whether valid \textsf{PoR}s can be constructed in practice. Recall that the core technique behind the construction of a \PoR~is forging a \PoL.

To this end, we study the following three questions:
\begin{itemize}
    \item \textbf{Q1.} What is the quality of the forged \textsf{PoL}s?
    \item \textbf{Q2.} What is the impact of the different algorithmic hyperparameters on the quality of forging?
    \item \textbf{Q3.} Do the forged mini-batches have the same distribution of samples as that of the original ones?
\end{itemize}

\subsection{Experimental Setup}\label{sec:practice:setup}

\textbf{Datasets and Models.}
We look at two image datasets, \textit{MNIST}~\cite{MNIST} and \textit{CIFAR-10}~\cite{CIFAR10}.

For \textit{MNIST}~\cite{MNIST}, we train a \textit{LeNet5}~\cite{lecun1989backpropagation} model with 61.7K parameters. We use $\texttt{step\_size}=1\times10^{-2}$, batch size $b=100$ and $20$ training epochs. 
The default values of the number of candidate mini-batches is $\M=200$ and number of random splits is $\K=5$.

For \textit{CIFAR-10}~\cite{CIFAR10}, we look at two models: \textit{ResNet-mini} \cite{He_2016_CVPR} with 1.49M parameters, and \textit{VGG-mini} \cite{simonyan2014very} with 5.75M parameters. We set $\texttt{step\_size}=1\times10^{-2}$, batch size $b=100$, and number of training epochs $=20$. All models are augmented  with random horizontal flips data augmentation, where each data point is randomly flipped with probability $0.5$ each time. In addition to the standard SGD update rule, we study a modified SGD setting as introduced in Section \ref{sec:PoR:algorithm}. We use notation $\dag$ to represent this setting. Specifically, we set momentum to be $0.9$, weight decay coefficient to be $5\times10^{-4}$, and use cosine anneal for the learning rate scheduling (see Appendix \ref{app:eval} for details). We set the number of candidate mini-batches as $\M=200$ and number of random splits as $\K=2$.

\textbf{Experiment Design.} 
Recall that the construction of a valid \PoR~constitutes 
\begin{itemize}
\item \textbf{Condition 1.} Forging a \PoL~for $\langle \mf, D' \rangle$ where $||\theta-\theta'||_2< \epsilon'$ such that $\epsilon'$ is reasonably low.
\item \textbf{Condition 2.} Ensuring that the models $\m$ and $\mf$ satisfy membership inference equivalence.
\end{itemize}

The goal of the experiments in this section is to generate a \PoR~for each point in $D$ and check how many of them are actually valid, i.e., satisfy the aforementioned criterion. The naive mechanism is to generate $n$ distinct \textsf{PoR}s, one for each data point. However, the resulting overhead would be very high and is beyond the computational powers of our academic resources. Hence, we explore two  alternatives that approximates this ideal: 
\begin{itemize}
\item {\textbf{Sampling Approximation.}} Instead of generating a \PoR~for all data points, we do it for a subset of data points randomly selected from the training dataset $D$ .
\item {\textbf{Combination Approximation.}} As discussed in Section \ref{sec:PoR:algorithm} (Key Idea 2), we re-use the same \PoR~for $\R$ data points at a time for the entire training dataset. 
\end{itemize}

In the sampling approximation setting, for both MNIST and CIFAR-10, we use a random subset of $n=10$K data points with $\T=2$K and $\R=1$.

In the combination approximation setting, we use the entire training dataset of size $n=60$K  with $\T=12$K and $\R \in \{10,100\}$ for MNIST. For CIFAR-10, we use the entire training dataset of size $n=50$K with $T=10$K and $\R=10$. More details are presented in Appendix \ref{app:eval}.

\textbf{\MI~attacks.} We evaluate the following \MI~attacks:

\begin{itemize}
\item \textit{EnhancedMIA}~\cite{EnhancedMIA} is currently the state-of-the-art \MI~attack which uses a likelihood ratio test approximately comparing the loss of the target with a threshold. %Specifically, we use the configuration $P$ (model-dependent \MI~attack via Population data). 
\item \textit{LiRA}~\cite{carlini2021membership} is one of the most recent \MI~attacks with a special focus on low false positive rate for the \MI~predictions. It performs a likelihood ratio test for the loss of $x^*$ between the distributions of losses from shadow models trained with and without $x^*$.
\item \textit{Xent}~\cite{Yeom18} is an entropy-based \MI~attack, which computes the cross entropy of the target model output at $x^*$, and then compares to a label-specific threshold derived from a shadow model. 
\item \textit{MEntr}~\cite{MEntr} is an improved \MI~attack of Xent, which takes into consideration of correct and wrong classification results when computing the entropy of the target model output at $x^*$. 
\item \textit{MIDA}~\cite{yu2021does} is an \MI~attack designed for models trained with data augmentation. It classifies sets of augmented samples instead of a single sample by extracting features from losses of the target model output at $x^*$ after data augmentation. Since we use data augmentation only for CIFAR-10, we evaluate MIDA only on this dataset.
\end{itemize}

%We train these \MI~attack models on the full MNIST and CIFAR-10 datasets, which leads to the strongest attack models. 
The training details including shadow models and threshold selection methods are presented in Appendix \ref{app:eval}. 

\textbf{Metrics.} We use the following three metrics for our evaluation. 

To evaluate in Condition 1, we use the \textit{$L_2$-distance} metric precisely defined as follows. Let $M_{\theta_{-i}}$ represent the forged model corresponding to the target of the \PoR~$\x=x_i, i \in [n]$. The first metric is the $\ell_2$ distance between the parameters of the original model, $\theta_*$, and forged model, $\theta_{-i}$. We report the mean value for $\d=||\theta_*-\theta_{-i}||_2^2/\mathrm{dim}(\theta_*)$. %A smaller value indicates the forged model is in average closer to the original model per parameter. 

To evaluate the second condition, we look at the following two metrics. First, is the \textit{membership prediction difference} metric. Let $U$ be the set of data points we want to carry out the \MI~attack on. Recall that $M_{\theta_{-i}}$ is the forged model used for constructing a \PoR~for the data point $x_i, i \in [n]$. The metric counts the number of data points in the training dataset $D$ for which the \MI~attack results in different predictions for at least one of the samples in $U$ for the original and the forged models:  $\c=\text{Percentage of data points satisfying }|\{x_i \in D| \exists z \in U \text{ s. t. } \calA(z,M_{\theta_*},\psi)\neq\calA(z,M_{\theta_{-i}},\psi)\}|$. To better understand where the \MI~predictions differ, for each pair of models $\{M_{\theta_*}, M_{\theta_{-i}}\}$  we consider the following three cases separately: $(1)$ In the first case, we look at the \MI~attack's predictions for the the target data point. Note that $x_i$ is a member of original model $M_{\theta_*}$ but a non-member of the forged model, $M_{\theta_{-i}}$. We refer to it as the \texttt{Diff} setting. $(2)$ The second case corresponds to data points that are \textit{common} to both the models, i.e., $D_{-i}=D\setminus \{x_i\}$. We refer to this as the \texttt{Common} setting. Due to our constraint of academic resources, we perform the following approximation. Instead of testing on the entire $n-1$ data points in  $D_{-i}$, we choose a subset of $5$ data points sampled at random for each $i\in [n]$.    $(3)$ In the third case, we carry the \MI~attacks on data points of the validation dataset which are non-members for both the models. Due to our constraint of academic resources, we again carry out the test on a subset of $5$ sampled at random for each $i\in [n]$. We refer to this as the \texttt{Validation} setting. More details are presented in Appendix \ref{app:eval}.

The final metric is the \textit{membership score difference} metric which measures the change in the prediction scores of the \MI~attack between the forged and original models: $\s=|\calA.\texttt{score}(x_i,M_{\theta_{-i}}, \psi)-\calA.\texttt{score}(x_i,M_{\theta_*}, \psi)|$. %A value near zero means the forged model is functionally equivalent to the original model w.r.t. this \MI~attack.

%\subsection{Valid \textsf{PoR}s}

\subsection{Quality of Forged PoLs} \label{sec:practice:results}

In this section, we answer Q1. For this, we look at the two conditions for generating a valid \PoR~as discussed above.

For evaluating Condition 1, we report the $L_2$-distance metric $\d$ in Fig. \ref{fig: model dist}. We observe that the forged models are very close to the original models for all of our experimental settings. 
Additionally, we observe that the distance increases for $\R=10$. This is because it is now more difficult to find forged mini-batches that approximate the original gradients well. Thus, the parameter $\R$ trades-off the computational overhead of generating a \PoR~with its quality.

Next, we discuss our evaluation of Condition 2. We start by analyzing the membership prediction difference metric in the \texttt{Diff} setting (Table \ref{tab: PoR failure rate setting 1}). If the \MI~attack were perfect, then the metric would have a value of $100\%$. On the other hand, if forging were perfect, then the metric would have a value of $0\%$.  Our empirical results show that the forging  works quite well - the \MI~predictions differ only for at most $0.3\%$ and $12.4\%$ (Xent attack on VGG-mini with $n=50K^{\dagger}$, $\R=10$ and $\K=2$) of all the data points in MNIST and CIFAR-10, respectively.
\par For the \texttt{Common} setting (Table \ref{tab: PoR failure rate setting 2}), if the \MI~attack were perfect, the metric would have a value of $0\%$. This is because the data points in $D_{-i}$ are members of \textit{both} the models $\{M_{\theta_*}, M_{\theta_{-i}}\}$.  Forging does not affect the data points in $D_{-i}$ and hence, ideally we expect the metric to be $0\%$. Here, a disagreement on the prediction is an indication of a false negative error on \textit{one} of the models. We observe that the \MI~attacks have different predictions for at most $1.2\%$  and $11.08\%$ of the data points for MNIST and CIFAR-10, respectively.  It is important to note that the LiRA and EnhancedMIA attacks have a much lower prediction difference -- 0.011\% and 0.35\% for MNIST and CIFAR-10, respectively. This is because they are currently the state-of-the-art attacks that are designed to maximize prediction accuracy under a \textit{low} false positive rate to improve the reliability of \MI~attacks (see our discussion in Section \ref{sec:practice:discussion}). Corollarily, the quality of the forging is significantly better under these state-of-the-art attacks.

Finally for the \texttt{Validation} setting (Table \ref{tab: PoR failure rate setting 3}), the metric would again have a value of $0\%$ under a perfect \MI~attack. This is because the data points in the validation dataset are \textit{not} members for both the models. Additionally, since forging is completely agnostic of these data points, we again expect a value of $0\%$ ideally. A difference in the two predictions here means that the \MI~attack has a false positive error on one of the models.  We observe that \MI~attacks have different predictions for $0.347\%$  and $11.3\%$ of the data points for MNIST and CIFAR-10, respectively.  Similar to above, the prediction difference is lower (consequently, the quality of forging is better) for LiRA and EnhancedMIA attacks -- $0.111\%$ and $6.336\%$ for MNIST and CIFAR-10, respectively.

%Third, LiRA performs better than EnhancedMIA. This is because the later is the strongest possible \MI~attack. %The way to interpret is that we cannot - this is because CIFAR10 

%with its quality.  from $1$  There is a clear trend that increasing $n$ and $\R$ leads to larger model distances, which indicates an computational overheadefficiency-quality trade-off in terms of \PoR~generation. The results on CIFAR-10 also consistently shows that the VGG-mini models leads to smaller model distances than ResNet-mini models in corresponding settings since . 

\begin{figure}[!t]
    \centering
    
    \begin{subfigure}[t!]{0.45\textwidth}
    \includegraphics[width=0.99\textwidth]{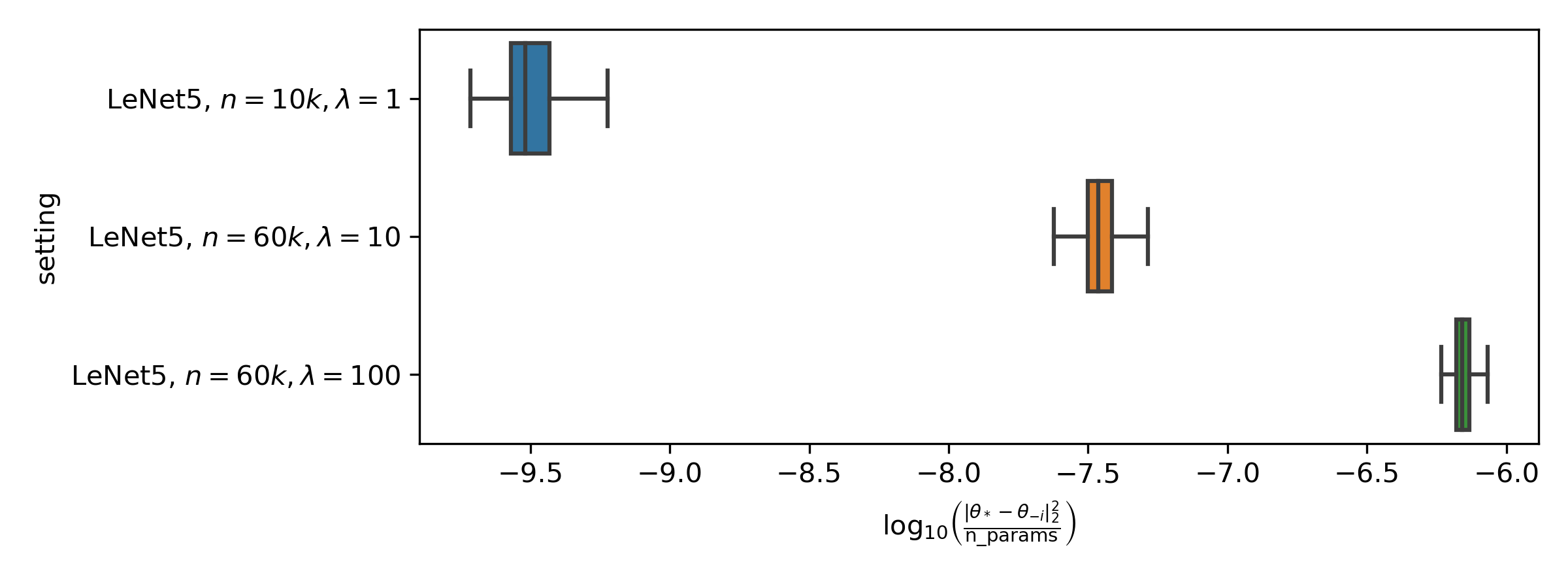}
    \caption{MNIST.}
    \label{fig: model dist mnist}
    \end{subfigure}\\
    \vspace{1em}
    \begin{subfigure}[t!]{0.45\textwidth}
    \includegraphics[width=0.99\textwidth]{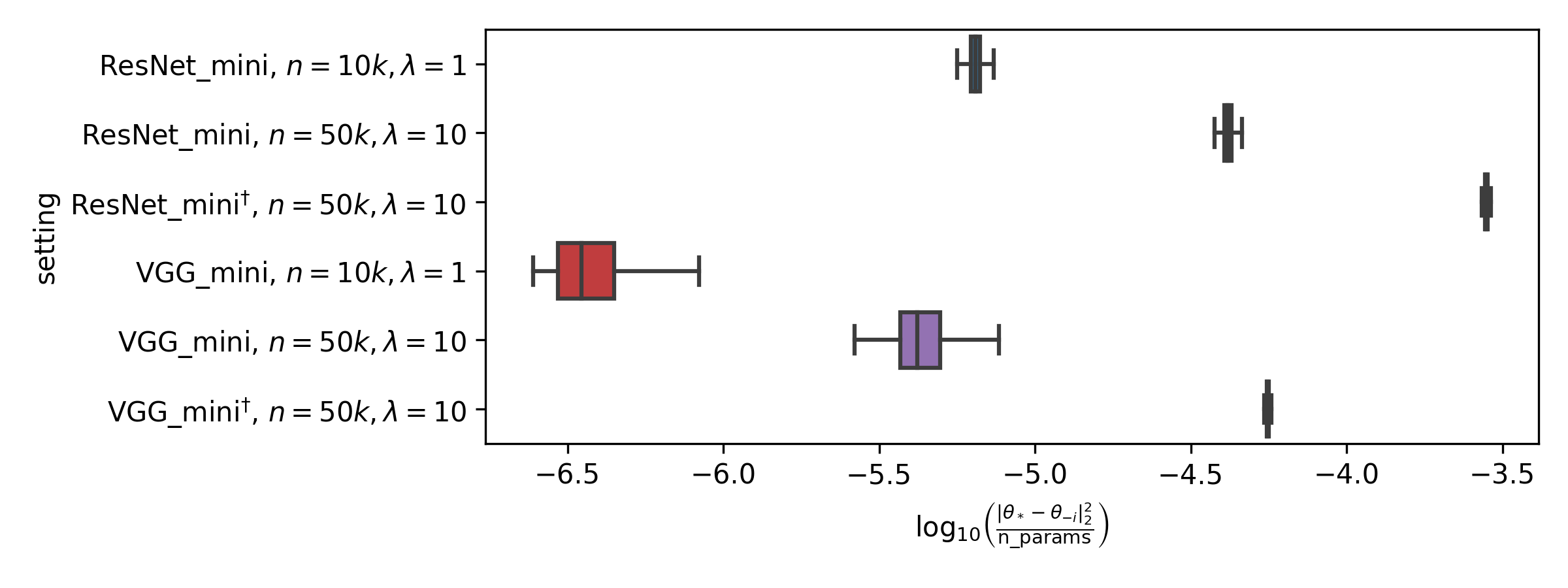}
    \caption{CIFAR-10 ($\dag$ means the modified SGD is used)}
    \end{subfigure}
    
    \caption{Box plots of $L_2$ model distances ($d_{\theta}$) in the $\log$ scale: $\log_{10}(||\theta_*-\theta_{-i}||_2^2/\mathrm{dim}(\theta_*))$. Smaller values indicate the forged models are closer to the original model.}
    \label{fig: model dist}
\end{figure}

In addition to the difference in the prediction bits, we also evaluate the membership score difference metric,  $\s$, for completeness. For MNIST (Fig. \ref{fig: MI abs diff mnist}), $\s$ is almost close to zero for all the \MI~attacks.  For CIFAR-10 (Fig. \ref{fig: MI abs diff mnist}), $\s$ is close to zero for most of the data points but with a slightly higher variance. This is expected since the models for CIFAR-10 are much larger and more complex than MNIST. Additional results are presented in Appendix \ref{app:eval}.

%In contrast to results on MNIST, smaller $n$ leads to higher variances (especially for entropy-based attacks like Xent), potentially because \kong{think about the reason here}. There is no significant difference between VGG-mini and ResNet-mini models. %In summary, these results show that forged models functionally approximate original models w.r.t. different \MI~attacks.

\begin{figure*}[!t]
    \centering
    
    \begin{subfigure}[t!]{0.38\textwidth}
    \centering
    \includegraphics[trim=25 25 0 0, clip, height=0.95in]{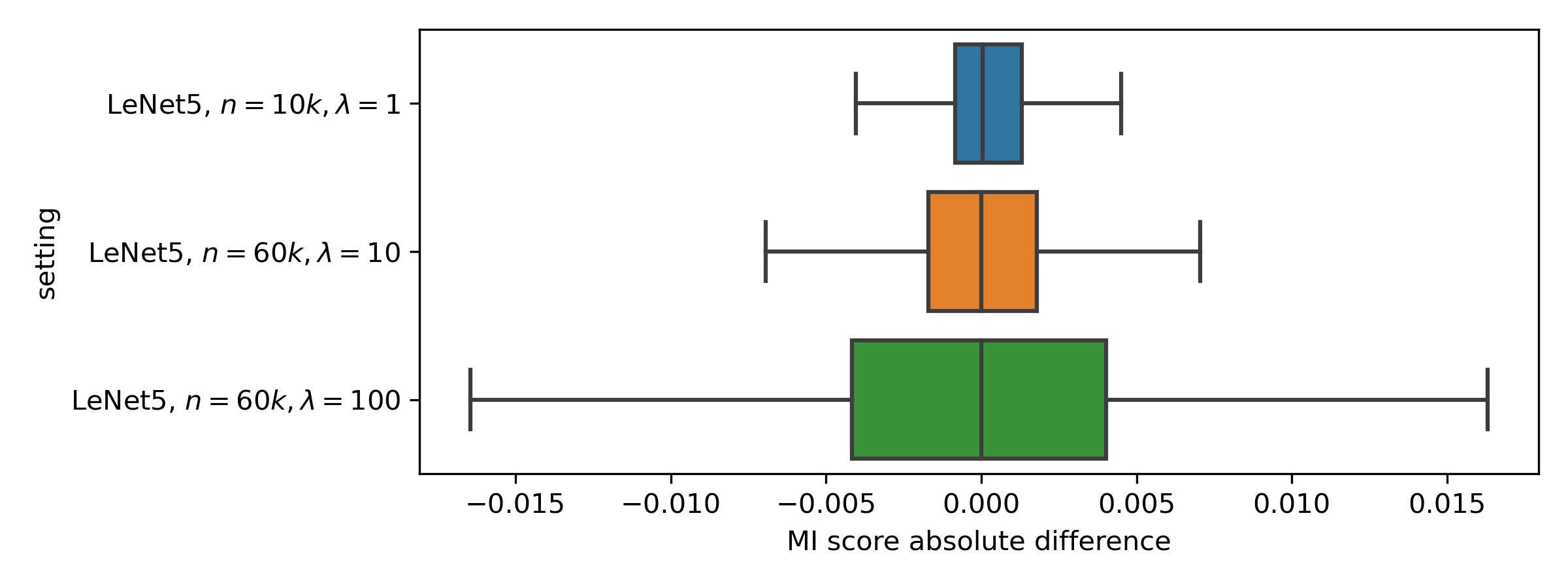}
    \caption{LiRA}
    \end{subfigure}
    \begin{subfigure}[t!]{0.3\textwidth}
    \centering
    \includegraphics[trim=150 25 0 0, clip, height=0.95in]{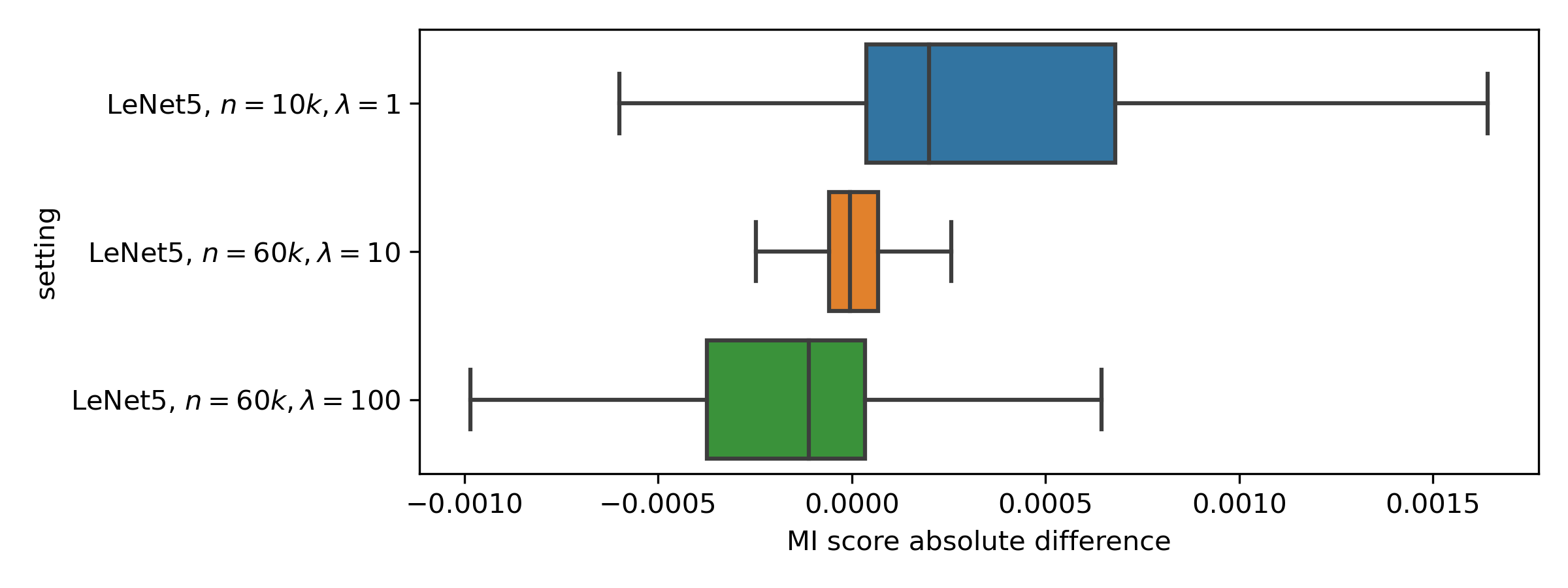}
    \caption{EnhancedMIA}
    \end{subfigure}
    %
    % \begin{subfigure}[t!]{0.45\textwidth}
    % \includegraphics[trim=0 10 0 0, clip, width=0.99\textwidth]{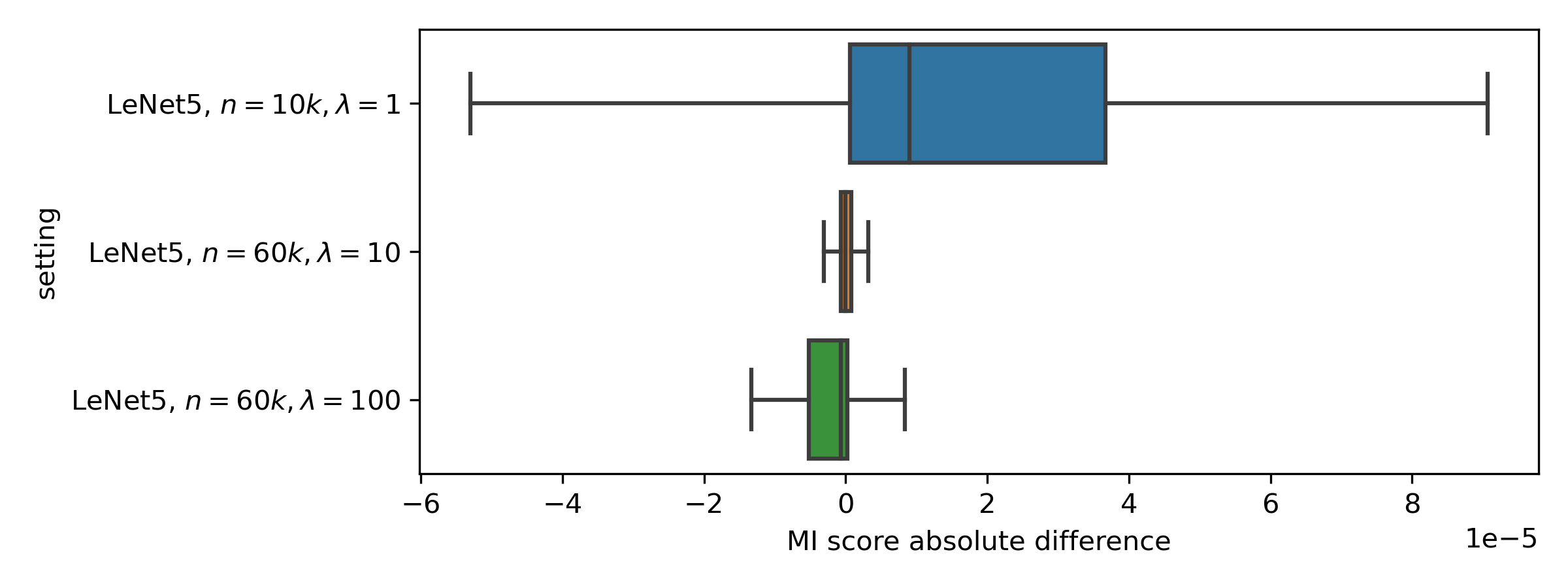}
    % \caption{MEntr}
    % \end{subfigure}\\
    %
    \begin{subfigure}[t!]{0.3\textwidth}
    \centering
    \includegraphics[trim=150 25 0 0, clip, height=0.95in]{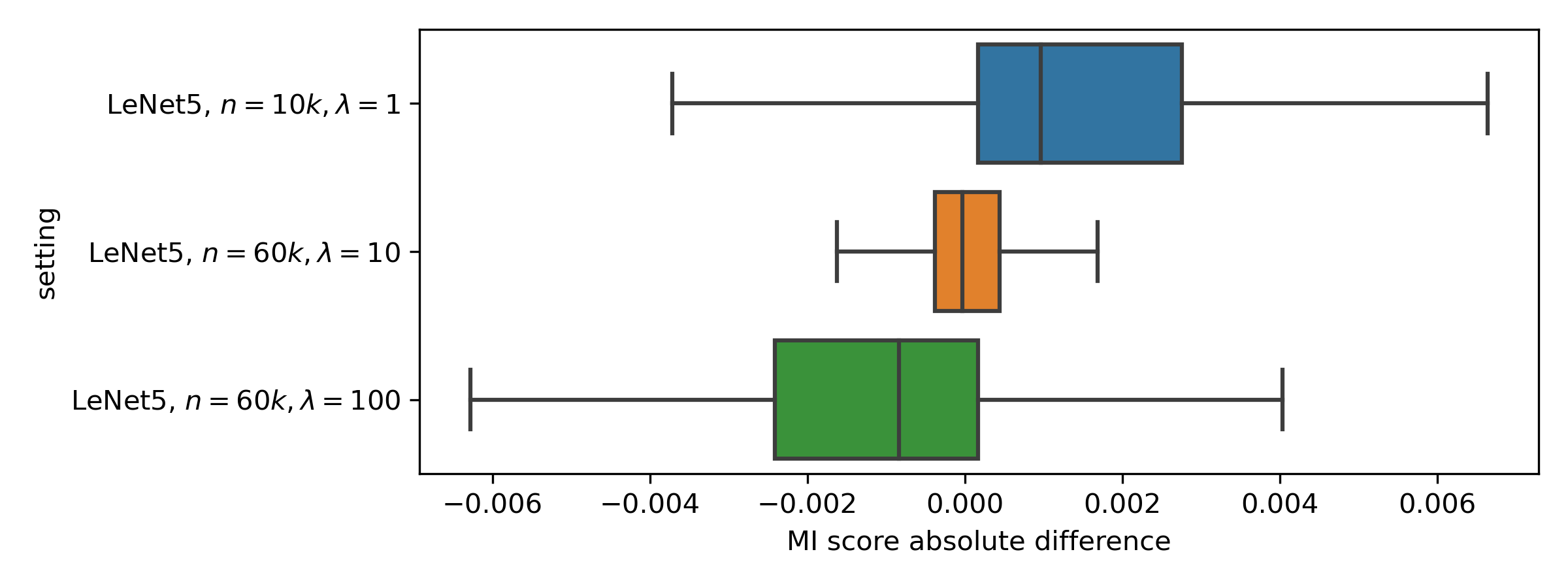}
    \caption{Xent}
    \end{subfigure}
    
    \caption{Box plots of membership score differences ($\s$) on MNIST. Most score differences are close to zero, indicating these MI attacks output similar scores between original and forged models.}
    \label{fig: MI abs diff mnist}
\end{figure*}

\begin{figure*}[!t]
    \centering
    
    \begin{subfigure}[t!]{0.38\textwidth}
    \centering
    \includegraphics[trim=25 25 0 0, clip, height=1.3in]{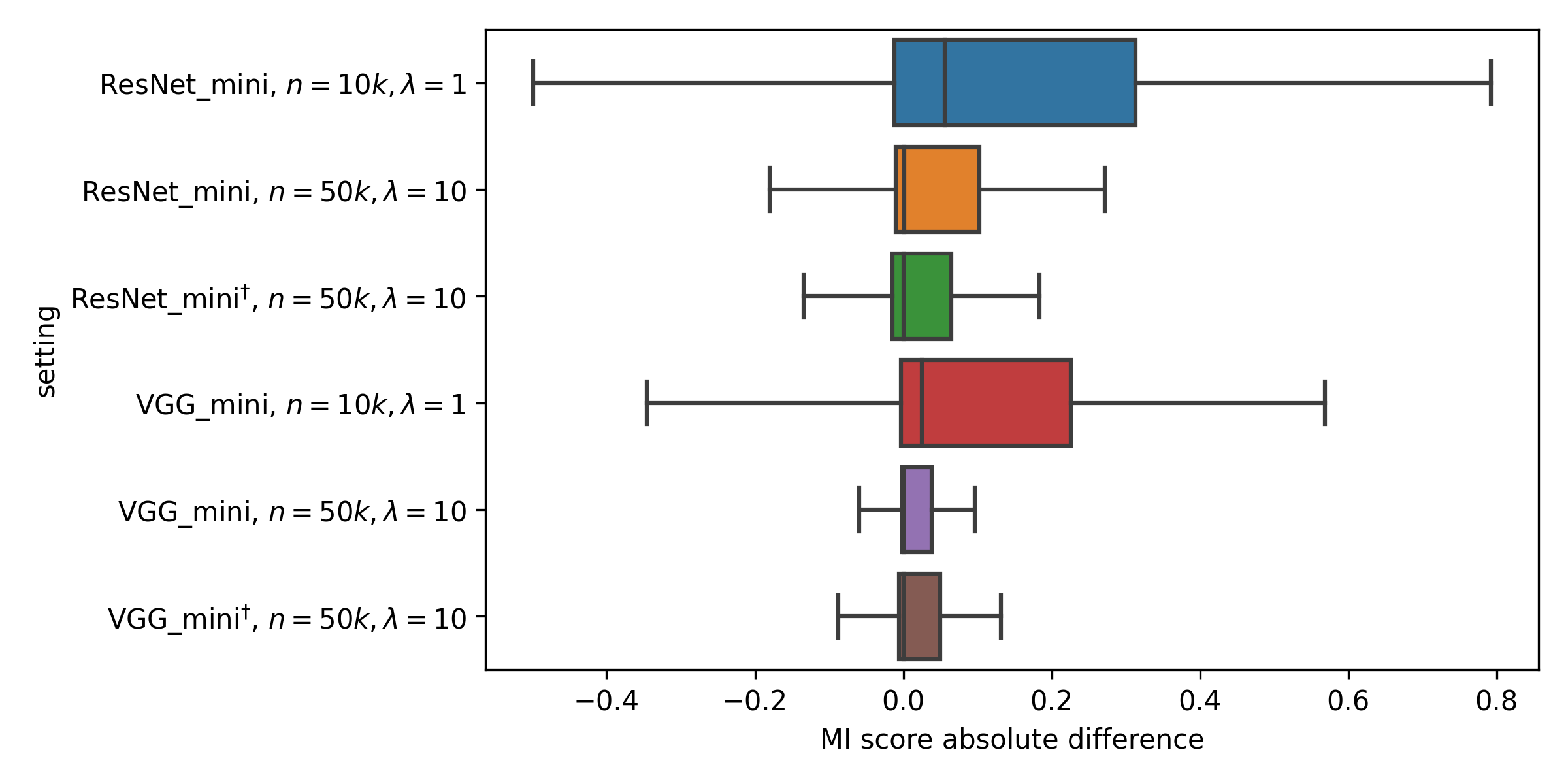}
    \caption{LiRA}
    \end{subfigure}
    \begin{subfigure}[t!]{0.3\textwidth}
    \centering
    \includegraphics[trim=175 25 0 0, clip, height=1.3in]{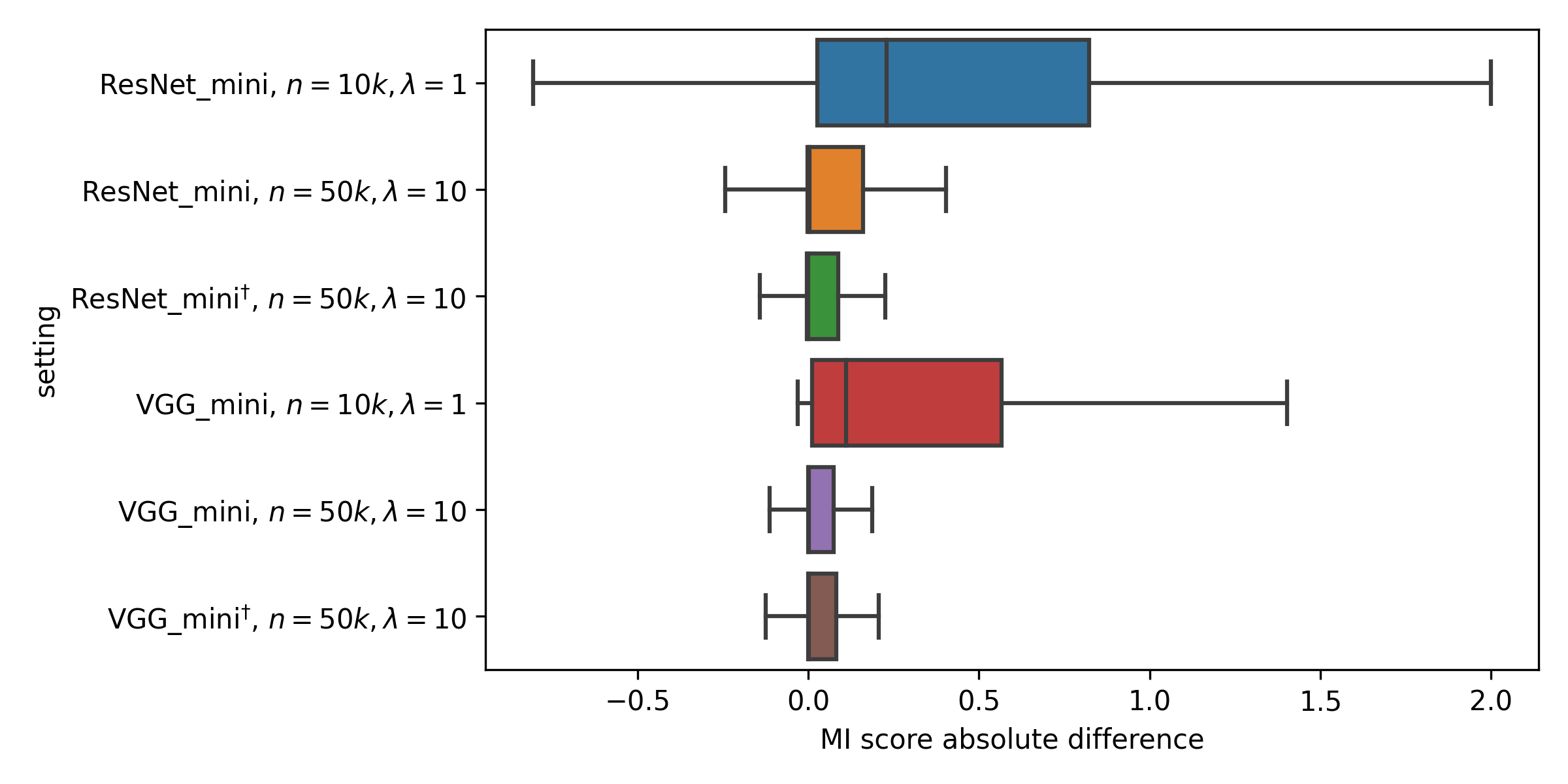}
    \caption{EnhancedMIA}
    \end{subfigure}
    %
    % \begin{subfigure}[t!]{0.45\textwidth}
    % \includegraphics[trim=0 10 0 0, clip, width=0.99\textwidth]{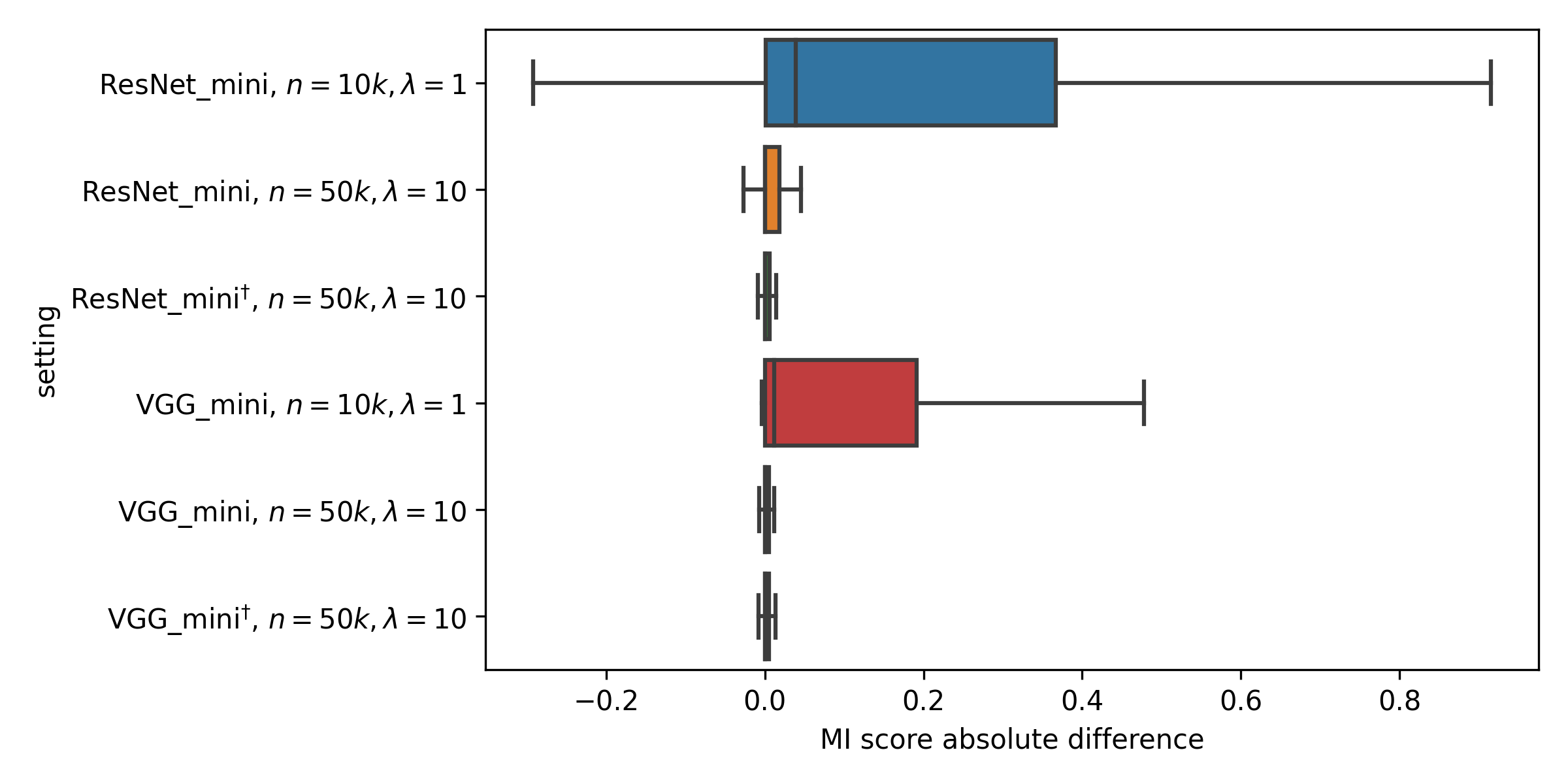}
    % \caption{MEntr}
    % \end{subfigure}
    %
    \begin{subfigure}[t!]{0.3\textwidth}
    \centering
    \includegraphics[trim=175 25 0 0, clip, height=1.3in]{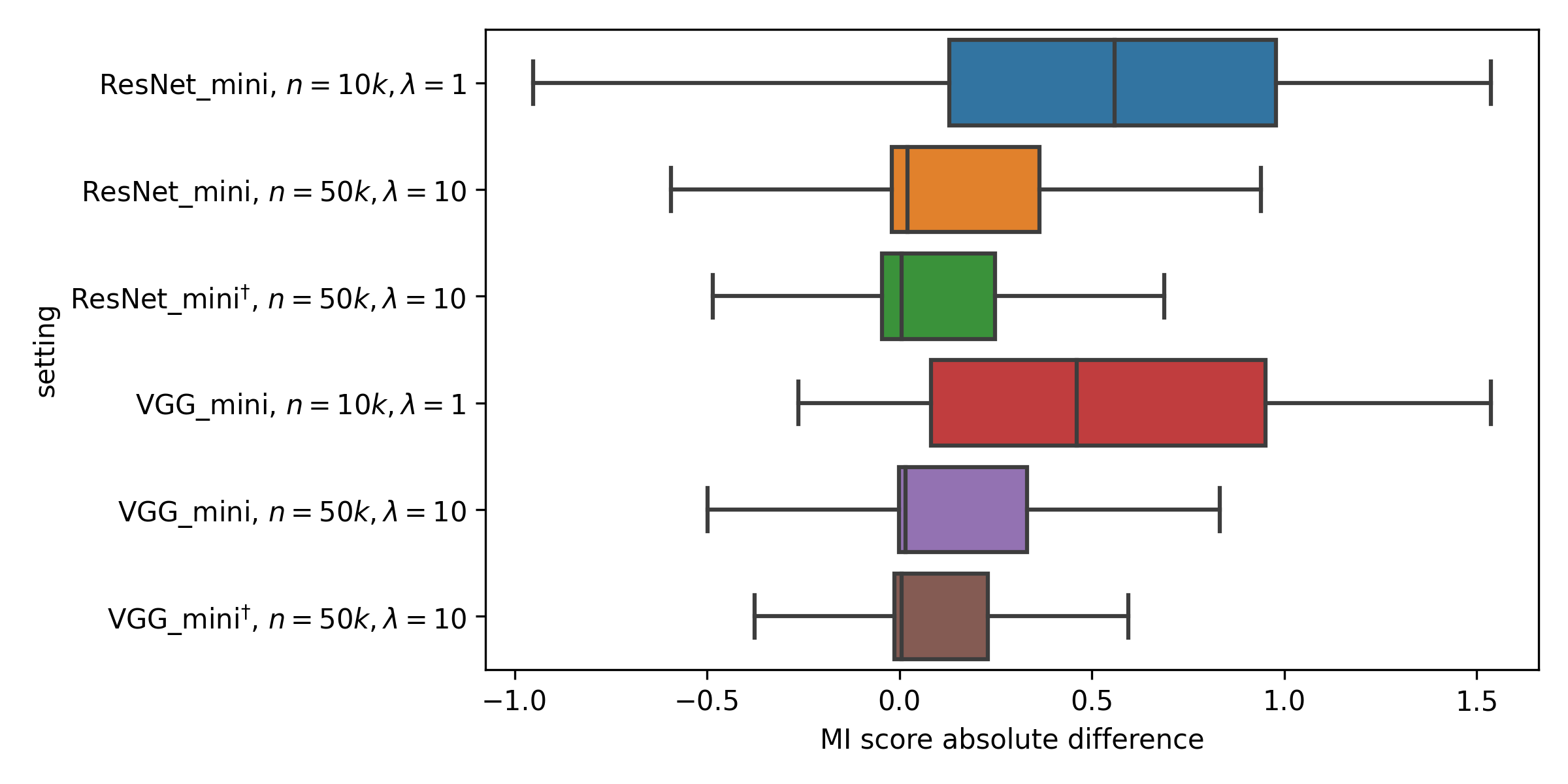}
    \caption{Xent}
    \end{subfigure}
    %
    % \begin{subfigure}[t!]{0.45\textwidth}
    % \includegraphics[trim=0 10 0 0, clip, width=0.99\textwidth]{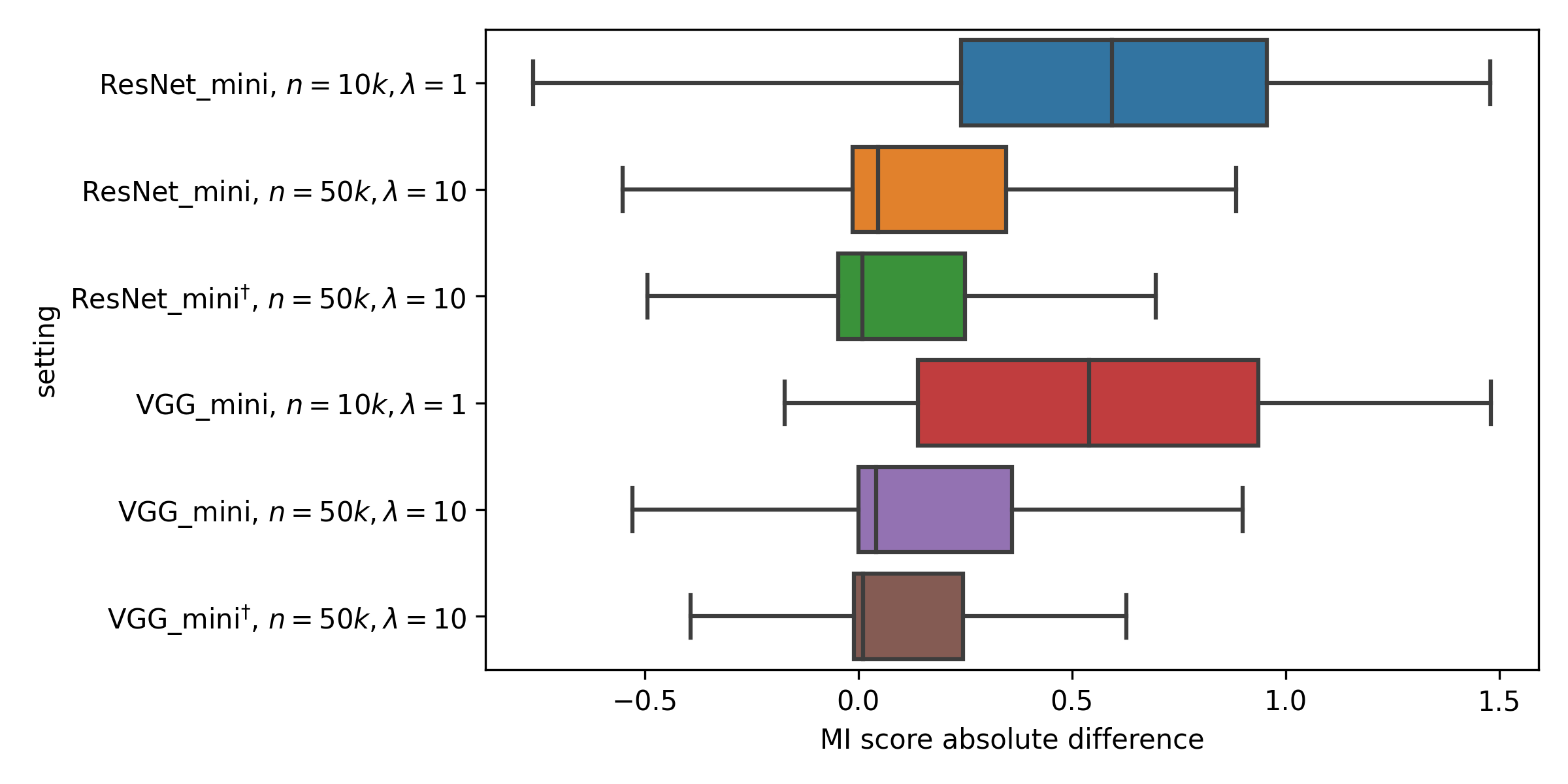}
    % \caption{MIDA}
    % \end{subfigure}
    
    \caption{Box plots of membership score differences ($\s$) on CIFAR10 ($\dag$ means the modified SGD is used). Most score differences are close to zero, indicating these MI~attacks output similar scores between original and forged models. The values and variances are higher than those in MNIST experiments, which is likely because models for CIFAR10 are larger and more complex. There is no significant difference between ResNet and VGG networks.}
    \label{fig: MI abs diff cifar10}
\end{figure*}

\begin{table*}[!t]
    \centering
    \begin{tabular}{cccccccccc}
    \toprule
        \multirow{2}{*}{Dataset} & \multirow{2}{*}{Model} & \multirow{2}{*}{$n$} & \multirow{2}{*}{$\R$} & \multirow{2}{*}{$\K$} & \multicolumn{5}{c}{\% Data points with different \MI~predictions in the \texttt{Diff} setting} \\ \cline{6-10}
        &&&&& LiRA~\cite{Carlini22} & EnhancedMIA~\cite{ye2021enhanced} & MEntr~\cite{MEntr} & Xent~\cite{Yeom18} & MIDA~\cite{yu2021does}
        \\ \hline
        \multirow{3}{*}{MNIST} & \multirow{3}{*}{LeNet5} 
        &  10K & 1   & 5 & 0.00\% & 0.01\% & 0.14\% & 0.25\% & - \\
        && 60K & 10  & 5 & 0.03\% & 0.09\% & 0.32\% & 0.31\% & - \\
        && 60K & 100 & 5 & 0.08\% & 0.57\% & 1.31\% & 1.30\% & - \\ \hline
        \multirow{6}{*}{CIFAR-10} & \multirow{3}{*}{ResNet-mini} 
        &  10K          &  1 & 2 & 0.63\% & 5.46\% & 2.50\% & 5.26\% & 2.39\% 
        \\
        && 50K          & 10 & 2 & 0.46\% & 2.16\% & 3.20\% & 4.69\% & 2.97\% 
        \\
        && 50K$^{\dag}$ & 10 & 2 & 0.42\% & 4.30\% & 8.42\% & 8.51\% & 6.05\% 
        \\ \cline{2-10}
        & \multirow{3}{*}{VGG-mini} 
        &  10K          & 1  & 2 & 0.52\% & 4.45\% & 3.71\% & 3.75\% & 1.45\%
        \\
        && 50K          & 10 & 2 & 0.36\% & 3.68\% & 7.10\% & 7.14\% & 7.26\% 
        \\
        && 50K$^{\dag}$ & 10 & 2 & 0.34\% & 3.63\% & 3.53\% & 12.4\% & 7.30\%
        \\
    \bottomrule
    \end{tabular}
    \caption{Membership prediction differences ($\c$) in the \texttt{Diff} setting. $\dag$ means the modified SGD is used.}
    \label{tab: PoR failure rate setting 1}
\end{table*}

\begin{table*}[!t]
    \centering
    \begin{tabular}{cccccccccc}
    \toprule
        \multirow{2}{*}{Dataset} & \multirow{2}{*}{Model} & \multirow{2}{*}{$n$} & \multirow{2}{*}{$\R$} & \multirow{2}{*}{$\K$} & \multicolumn{5}{c}{\% Data points with different \MI~predictions in the \texttt{Common} setting} \\ \cline{6-10}
        &&&&& LiRA~\cite{Carlini22} & EnhancedMIA~\cite{ye2021enhanced} & MEntr~\cite{MEntr} & Xent~\cite{Yeom18} & MIDA~\cite{yu2021does}
        \\ \hline
        \multirow{3}{*}{MNIST} & \multirow{3}{*}{LeNet5} 
        &  10K & 1   & 5 & 0.00\% & 0.02\% & 0.04\% & 0.06\% & - 
        \\
        && 60K & 10  & 5 & 0.01\% & 0.10\% & 0.30\% & 0.26\% & - 
        \\
        && 60K & 100 & 5 & 0.06\% & 0.57\% & 1.33\% & 1.18\% & - 
        \\ \hline
        \multirow{6}{*}{CIFAR-10} & \multirow{3}{*}{ResNet-mini} 
        &  10K          & 1  & 2 & 0.04\% & 6.29\% & 2.47\% & 2.11\% & 1.04\% 
        \\
        && 50K          & 10 & 2 & 0.23\% & 2.15\% & 3.26\% & 4.85\% & 3.11\% 
        \\
        && 50K$^{\dag}$ & 10 & 2 & 0.35\% & 4.31\% & 8.57\% & 8.62\% & 6.21\% 
        \\ \cline{2-10}
        & \multirow{3}{*}{VGG-mini} 
        &  10K          & 1  & 2 & 0.00\% & 4.69\% & 3.80\% & 3.47\% & 1.05\%  
        \\
        && 50K          & 10 & 2 & 0.07\% & 3.81\% & 7.54\% & 7.59\% & 7.61\% 
        \\
        && 50K$^{\dag}$ & 10 & 2 & 0.22\% & 3.59\% & 3.65\% & 11.1\% & 6.85\%
        \\
    \bottomrule
    \end{tabular}
    \caption{Membership prediction differences ($\c$) in the \texttt{Common} setting. $\dag$ means the modified SGD is used.}
    \label{tab: PoR failure rate setting 2}
\end{table*}

\begin{table*}[!t]
    \centering
    \begin{tabular}{cccccccccc}
    \toprule
        \multirow{2}{*}{Dataset} & \multirow{2}{*}{Model} & \multirow{2}{*}{$n$} & \multirow{2}{*}{$\R$} & \multirow{2}{*}{$\K$} & \multicolumn{5}{c}{\% Data points with different predictions in the \texttt{Validation} setting} \\ \cline{6-10}
        &&&&& LiRA~\cite{Carlini22} & EnhancedMIA~\cite{ye2021enhanced} & MEntr~\cite{MEntr} & Xent~\cite{Yeom18} 
        & MIDA~\cite{yu2021does}
        \\ \hline
        \multirow{3}{*}{MNIST} & \multirow{3}{*}{LeNet5} 
        &  10K & 1   & 5 & 0.00\% & 0.02\% & 0.07\% & 0.06\% & - 
        \\
        && 60K & 10  & 5 & 0.01\% & 0.11\% & 0.35\% & 0.25\% & - 
        \\
        && 60K & 100 & 5 & 0.06\% & 0.57\% & 1.31\% & 1.30\% & - 
        \\ \hline
        \multirow{6}{*}{CIFAR-10} & \multirow{3}{*}{ResNet-mini} 
        &  10K          & 1  & 2 & 0.36\% & 6.34\% & 2.15\% & 5.90\% & 2.71\% 
        \\
        && 50K          & 10 & 2 & 0.34\% & 2.12\% & 3.35\% & 4.63\% & 3.03\% 
        \\
        && 50K$^{\dag}$ & 10 & 2 & 0.31\% & 4.34\% & 7.52\% & 7.64\% & 5.89\%
        \\ \cline{2-10}
        & \multirow{3}{*}{VGG-mini} 
        &  10K          & 1  & 2 & 0.19\% & 4.71\% & 1.97\% & 2.16\% & 1.29\% 
        \\
        && 50K          & 10 & 2 & 0.20\% & 3.77\% & 6.31\% & 6.32\% & 6.61\% 
        \\
        && 50K$^{\dag}$ & 10 & 2 & 0.23\% & 3.58\% & 3.55\% & 11.3\% & 6.95\%
        \\
    \bottomrule
    \end{tabular}
    \caption{Membership prediction differences ($\c$) in the \texttt{Validation} setting. $\dag$ means that modified SGD is used.}
    \label{tab: PoR failure rate setting 3}
\end{table*}

\begin{table}[!t]
    \centering
    \begin{tabular}{cccccc}
    \toprule
        {Dataset} & {Model} & {$n$} & {$\R$} & {$\K$} & $\ell_1$ distance \\ \hline
        \multirow{3}{*}{MNIST} & \multirow{3}{*}{LeNet5} & 10k & 1 & 5 & $0.0077 \pm 0.0001$ \\
        && 60k & 10 & 5 & $0.0249 \pm 0.0002$ \\
        && 60k & 100 & 5 & $0.0941 \pm 0.0005$ \\ \hline
        \multirow{6}{*}{CIFAR-10} & \multirow{3}{*}{ResNet-mini} & 10k & 1 & 2 & $0.0166 \pm 0.0001$ \\
        && 50k & 10 & 2 & $0.0285 \pm 0.0002$ \\
        && 50k$^{\dag}$ & 10 & 2 & $0.0285 \pm 0.0002$ \\ \cline{2-6}
        & \multirow{3}{*}{VGG-mini} & 10k & 1 & 2 & $0.0166 \pm 0.0001$ \\
        && 50k & 10 & 2 & $0.0285 \pm 0.0001$ \\
        && 50k$^{\dag}$ & 10 & 2 & $0.0285 \pm 0.0002$ \\
    \bottomrule
    \end{tabular}
    \caption{Uniformity of forged mini-batches measured in $\ell_1$ distance between frequencies of all samples and the uniform distribution. Mean and standard errors among 100 PoRs are reported for each setting. The standard SGD yields a distance up to $0.177$. Results in all settings are much lower than this value, indicating the forged mini-batches are adequately uniform.}
    \label{tab: uniformity}
\end{table}

\subsection{Impact of Hyper-parameters}
%\arc{Can move to appendix}
In this section, we answer Q2. Specifically, we analyze the impact of the two key hyper-parameters of our proposed Algorithm \ref{alg: proposed}, number of splits $\K$ and number of candidate mini-batches $\M$, on the quality of forging. A large $\K$ leads to low mini-batch re-using  and a large $\M$ means more candidate mini-batches to choose from -- both come at the cost of increased computational overhead  (see Section \ref{sec:PoR:algorithm}). We evaluate for $\K \in \{5,10\}$ and $\M \in \{200,400\} $ on MNIST.
% mode dist
The $L_2$-distance metric $\d$ is reported in Fig. \ref{fig: model dist mnist ablation}. We observe that increasing the values of both $\K$ and $\M$ do not show any significant improvement over the values reported originally in Fig. \ref{fig: model dist mnist}. We report the correspond values for the membership prediction difference metric in the \texttt{Diff} setting in Table \ref{tab: PoR failure rate ablation}. Again we observe that the values are comparable to that of our prior results in Table \ref{tab: PoR failure rate setting 1} for most of our experimental settings. We see a slight improvement for the case of $\R=100$ and $\M=400$. This validates our choice of the hyper-parameters for our primary experiments in Section \ref{sec:practice:results}.

\subsection{Distribution of Forged Mini-batches}

In this section, we answer Q3. Stated otherwise, the goal here is to check whether a verifier can distinguish between a forged \PoL~and a true one (i.e., one recording a true SGD training process). We answer this question from a statistical perspective here -- we test whether the distribution of the samples in the forged mini-batches satisfy uniform distribution (which would be the case for a true \PoL). Specifically, we compute the $\ell_1$-distance between the frequencies of all the samples in  the forged mini-batches $\{B_{\mathrm{forge}}^{(t)}(x_i)\}_{t=1}^T$ and the uniform distribution (Table \ref{tab: uniformity}). We choose this as our metric because standard statistical tests, such as the Kolmogorov–Smirnov test, are a misfit for our setting (see Appendix \ref{app:eval} for details). We observe that $\ell_1$-distances in all of our experimental settings have a very low value indicating high uniformity in the distributions. %In fact they are lower than the a for a completely random SGD can achieve. This means the forged mini-batches are adequately uniform to be considered as obtained from SGD. 
As expected, a larger $\R$ results in less uniformity. Additionally, the uniformity  of the modified SGD setting is similar to that of the standard SGD setting. 
\begin{figure}[!t]
    \centering
    \includegraphics[width=0.45\textwidth]{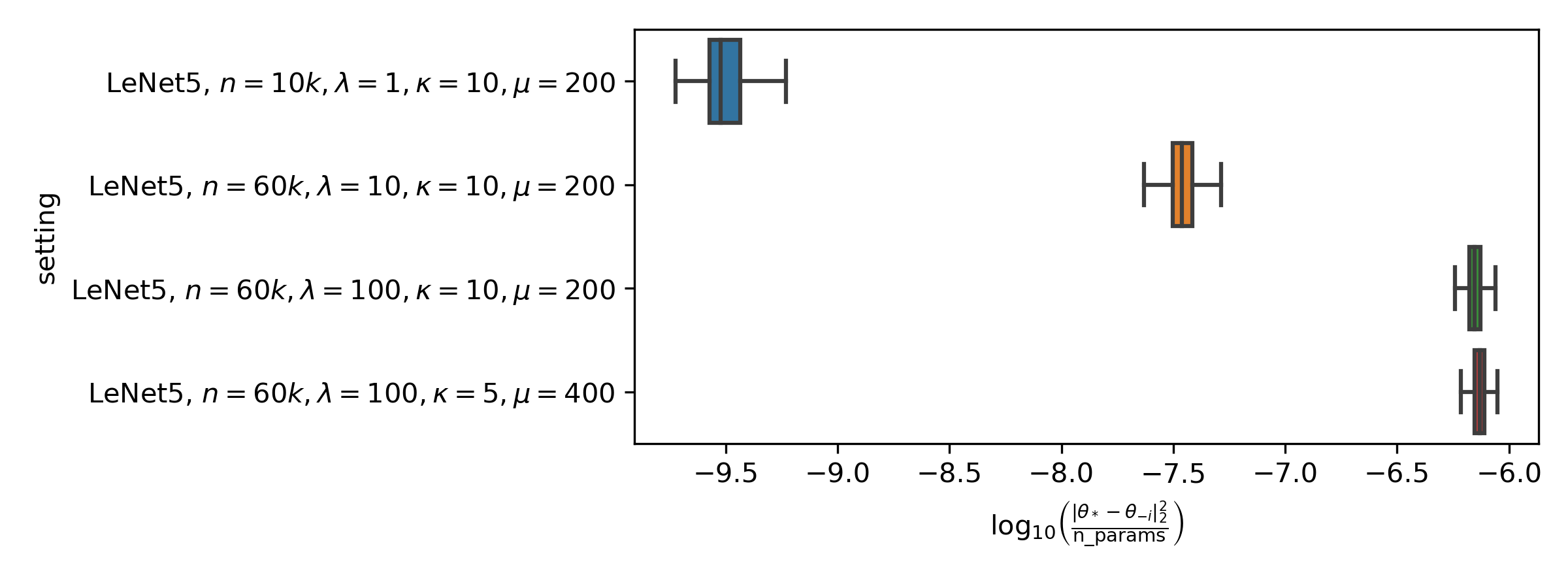}
    \caption{Box plot of $L_2$ model distances on MNIST, with different $\K$ and $\M$ values compared to Fig. \ref{fig: model dist mnist} (where $\K=5$ and $\M=200$). There is no improvement on model distances when increasing $\K$ from 5 to 10 or $\M$ from 200 to 400. }
    \label{fig: model dist mnist ablation}
\end{figure}

\begin{table*}[!t]
    \centering
    \begin{tabular}{cccccccc}
    \toprule
        \multirow{2}{*}{$n$} & \multirow{2}{*}{$\R$} & \multirow{2}{*}{$\K$} & \multirow{2}{*}{$\M$} & \multicolumn{4}{c}{\% Data points with different predictions in the \texttt{Diff} setting} \\ \cline{5-8}
        &&&& LiRA & EnhancedMIA & MEntr & Xent \\ \hline
        10k & 1   & 10 & 200 & 0.00\% & 0.01\% & 0.14\% & 0.27\% \\
        60k & 10  & 10 & 200 & 0.02\% & 0.13\% & 0.31\% & 0.31\% \\
        60k & 100 & 10 & 200 & 0.07\% & 0.65\% & 1.39\% & 1.30\% \\ 
        60k & 100 & 5  & 400 & 0.08\% & 0.56\% & 1.35\% & 1.27\% \\ 
    \bottomrule
    \end{tabular}
    \caption{Membership prediction differences in the Diff setting on MNIST, with different $\K$ and $\M$ values compared to Table \ref{tab: PoR failure rate setting 1} (where $\K = 5$ and $\M = 200$). There is a slight or no improvement on PoR~failure rates when increasing $\K$ from 5 to 10 or $\M$ from 200 to 400.}
    \label{tab: PoR failure rate ablation}
\end{table*}

\subsection{Discussion}\label{sec:practice:discussion}
The key takeaways from our evaluation are as follows.

First, we are able to generate a valid \PoR~for most of the data points. 
 Based on the above observations, we can conclude that we are able to generate membership inference equivalent \footnote{modulo our computation relaxation of testing only on a random sample of $D_{-i}$} models (Definition \ref{def:MI:equivalence}) for at least $98.8\%$ and $87.8\%$ of the data points of MNIST and CIFAR-10, respectively. In other words, we are able to generate a valid \PoR~for the same number of data points. Thus, we see that our \PoR~generation algorithm performs remarkably well in practice even for a complex dataset as CIFAR-10. 

    Second, our performance is even better on the state-of-the-art \MI~attacks that focus on a low false positive regime. The reason why this is an important observation is that there is a growing consensus that, for improving the practical reliability of membership inferencing predictions, average accuracy is not a meaningful metric for evaluating \MI~attacks~\cite{Carlini22,discredit22,Watson2021,ye2021enhanced}. Rather, a better metric is measuring the true positive rate of an \MI~attack under \textit{low} false positive rate conditions. Out of the four evaluated attacks, only the more recent LiRA and EnhancedMIA attacks are currently successful in the low false positive rate regime (LiRA has $10\times$ better performance than the other two attacks as reported in \cite{Carlini22}). Our proposed algorithm performs even better on these two attacks -- we are able to generate a valid \PoR~for $99.9\%$ and $95.7\%$ of the data points of MNIST and CIFAR-10, respectively.

  Third,  the $\R$ parameter (the number of data points that share the \textit{same} \PoR -- combination approximation) results in  a computational cost/quality of generated \PoR~trade-off. Lower the value of $\R$, better is the quality of the generated \PoR~at the cost of increased computational overhead. Specifically, the quality of the forging (equivalently, the generated \textsf{PoR}s) is the best for $\R=1$ which is studied under the sampling approximation evaluation setting.

\section{Implications of a PoR}\label{sec:implication}
In this section, we discuss the implications of a \PoR~in the context of \MI~attacks. 
\par\textbf{Consequences for the model owner.} The primary consequence of our ability to construct valid \textsf{PoR}s is that it establishes the unreliability of  \MI~attacks in practice. Using a \PoR, the model owner can discredit any membership inference claim from an adversary on the data points of its training dataset. As a result, the adversary fails to bring any real consequences for the model owner based on its prediction. For instance, say an adversary presents its membership inference claim on a data point $\x$ in a court of law. However, as discussed above,  a valid \PoR~enables the model owner to plausibly deny the \MI~prediction and present a counter claim that $\x$ is in fact \textit{not} a member. This is enough to raise a reasonable doubt on the adversary's claim. The model owner can, thus, get the case to be dismissed, thereby relieving themselves of any legal consequence. Thus, an adversary cannot ``go-to-court" with a membership inference claim.  Consequently, \textsf{PoR}s bring to question the viability and severity of any real threat a model owner faces from an \MI~attack in practice.  

It is interesting to note that in some real-world settings, the model owner's ability to generate valid \textsf{PoR}s could have adverse implications. For instance, consider a setting where an auditor wants to check whether a model used a target data point for its training without proper legal consent. This could be useful for detecting various instances of data misuse, such as privacy violation, copyright abuse or jurisdictional infringement.  The auditor can launch a \MI~attack on the model for the target data point and register a case in a court of law based on the prediction. However, as discussed above, the model owner can successfully repudiate the auditor's claim with a \PoR~and get the case to be acquitted. It is interesting to note that although the entity carrying out the attack is designated as the ``adversary" in our technical formalization, the actual semantics could be quite different in reality. In the aforementioned case, the challenger (model owner) is the unscrupulous entity who can now get away with data misuse. 

\par \textbf{Re-evaluating our understanding of \MI~attack predictions.}  \MI~attacks predictions cannot be relied upon for data points with a valid \PoR~in practice. However, as discussed in Section \ref{sec:PoR:impossibility}, we observe that it is impossible to generate a valid \PoR~for all possible data points. Specifically, one cannot generate a valid \PoR~for outliers or out-of-distribution (OOD) data points. This is also highlighted in our empirical evaluation (Section \ref{sec:practice}). Based on the above observation, we argue that our current understanding of the \textit{practical implications} of an \MI~attack predictions requires re-evaluation. We argue that a \MI~attack in perhaps better suited for distinguishing between in-lier/in-distribution and outlier/out-of-distribution data points in practice. This highlights the disparity in privacy vulnerability of the different data points  -- while the practical efficacy of an \MI~attack on in-distribution data points is questionable,  data points that are not well-represented in the dataset (such as, data points from individuals belonging to minority groups) are still susceptible to privacy violations. 
\par Similar observations have been made in prior literature as well.  Carlini et al.~\cite{Carlini22} observed that \MI~attacks are able to identify out-of-distribution data points (records) with higher success.  Ye et al.~\cite{ye2021enhanced} also observed that different data points have different vulnerabilities. Watson et al.~\cite{Watson2021} showed that the success of an \MI~attack is correlated with the model's difficulty of correctly classifying a target sample -- well-represented points have higher false positive rates. 
A concurrent work~\cite{discredit22} shows that the membership score distributions of member samples are similar to that of their non-member neighbors and argues that \MI~attacks identify the memorized sub-populations (samples that are close in the latent space) and not the individual samples (records). In other words, \MI~attacks can identify that a sample from a sub-population is a member, but they cannot reliably identify which exact sample in that sub-population is in the training dataset. 

%\par As mentioned in Section \ref{sec:background:MI}, another use-case of \MI~attacks is for privacy auditing by the model owner pre-deployment. We would like to re-iterate that \textsf{PoR}s~have no bearing on this use-case and \MI~attacks could be continued . In fact, by analyzing \PoR~the could be quite effective in identifying the data points that are most vulnerable to privacy violations. .

%Hence, it ccan to iden. However, the threat of a moajrity of the points could bethwarted by Based what we have shown is that the claims would not hold up in a auditor .

%\textbf{Real-world implications of a \PoR.} 
%A group of Whistle blower have used which has been used to

\par \textbf{Re-evaluating the privacy leakage of \ML~models.} Conceptually for an \MI~attack, the sensitive information in contention is the single bit indicating the data point's membership status. However, a \PoR~empowers the model owner to plausibly deny any claims on this membership status. This highlights the limitations of \MI~attacks for measuring privacy leakage of \ML~models in practice.  For instance, in the real-world setting of a privacy audit discussed in the above paragraph, \MI~attacks clearly fail to capture the correct privacy semantics.  It is important to note that the validity of a \PoR~is completely agnostic of the  \MI~attack, i.e., a \PoR~is effective even if the \MI~attack makes genuinely reliable predictions (for instance, with very low false positive rate on the entire input distribution). In other words, inferring just the membership status is inadequate to make any definitive conclusions about a model's privacy leakage in many real-world settings. This urges us to re-evaluate the rationale behind using \MI~attacks as a privacy metric in practice. 

\section{Discussion}\label{sec:discussion}
In this section, we discuss several points relevant to \textsf{PoR}s.

\textbf{Zero-knowledge access to \textsf{PoR}s.} It is important to note that revealing a \textsf{PoR} in the clear to an adversary could be counterproductive for privacy. This is because the \PoR~records all the true data points in $D\setminus \{\x\}$ (in the form of the recorded mini-batches) which is revealed to the adversary, thereby violating privacy. So in practice, the adversary should be allowed a \textit{zero-knowledge} access to the \PoR. By this statement we mean that the adversary should be convinced of the validity of the \PoR~\textit{without} learning anything else about the rest of the data points of the dataset. Note that this is not a problem in the context of machine unlearning (for forgeability, Section \ref{sec:background:forgeability}) or establishing model ownership (for \textsf{PoLs}, Section \ref{sec:background:PoL}). However, privacy is innate in the context of \MI~attacks. 
 We list down a few mitigation strategies for the model owner as follows:
\begin{enumerate}\item  The model owner can use zero-knowledge proofs and achieve this goal cryptographically. Specifically, the model owner can first prove via private set intersection~\cite{} that $D'\cap \{\x\} =\emptyset$. Next, they can commit to $D'$ and show via zero-knowledge proofs that the computation of the training log was performed correctly on $D'$. \item The verification of the \PoR~log can be performed by a trusted third-party entity. For instance, as mentioned in Section \ref{sec:implication}, regulatory bodies such as the Electronic Frontier Foundation (EFF) or a judge/juror in the court of law could act as the verifier. The privacy of the data points used in the \PoR~is thereby protected via legal obligations. \item The \PoR~can be constructed from a public dataset, $D_p$, instead of from $D\setminus \{x^*\}$, such that $D_p$ is drawn from the distribution as $D$. The assumption of having a public dataset is common in both ML \cite{xie2020zeno,Cao2021FLTrustBF,kairouz2019advances} as well as privacy literature \cite{liu2021leveraging,Bassily2020PrivateQR,Beimel2020ThePO,EIFFeL}. The dataset can be small and obtained by manual labeling \cite{Google17}. Recall, that for generating a \PoR~from $D'=D\setminus \{\x\}$, all the original mini-batches $\B, t \in [\T]$ that contain the data point $\x$ are replaced with forged ones, $\Bf$, without $\x$. In order to construct the \PoR~from the public dataset, $D_p$, we need to replace \textit{all} the $\T$ original mini-batches with forged mini-batches sampled from $D_p$. Note that generating a \PoR~from $D_p$ would potentially degrade its quality, i.e., the resulting tolerance parameter $||\theta-\theta'||_2$ would be higher. This is because first, the mini-batches for every iteration has to be replaced, and second, none of the forged mini-batches contain any of the data points in $D$ resulting in a poorer approximation of the original gradient.
 \end{enumerate}
\textbf{Knowledge about the size of the dataset} In our context, an additional assumption is that the the adversary cannot know the exact size of the dataset. It is so  because with the current mechanism, the \PoR~can be only constructed from $D'\subset D$, i.e., a dataset that is strictly smaller than $|D|=n$. In case the adversary already knows $n$, then  they can ``reject" the \PoR~just based on the dataset size.  

\textbf{Spoofing attacks on \PoL.}  Recall that \PoL~was originally introduced in the context of establishing model ownership -- the central idea being only an entity who has performed the training from scratch (or has expended at least the equivalent computational effort) should be able to produce a valid \PoL. However, recent work~\cite{Spoof1, Spoof2} has shown that it is in fact possible for an adversary to \textit{spoof} a \PoL~for a model they did train with less much less computational effort than the actual model owner (trainer). This highlights the limitations of using \PoL~for resolving  model ownership. However, the existence of such spoofing attacks have no bearing in our context. It is because for a \PoR~to be valid, it is enough to just establish computational feasibility to have obtained $\mf$ from $D'$. In other words, the fact that it is plausible to have trained $\mf$ on $D'$ is enough to raise a reasonable doubt on the adversary's membership inference claim on the data point $\x$. Hence, even if the challenger ``spoofs" the \PoL~(hence \PoR) it does not affect our claim of repudiation -- the mechanism of generation of the \PoR~is  completely irrelevant to our goal of establishing computational feasibility. As discussed in Section \ref{sec:implication}, in a court-of-law, the burden-of-proof would lie on the adversary, i.e., establishing \textit{beyond} a reasonable doubt that was $\x$ indeed used by the challenger for training.

\textbf{\PoR~for a claim on non-membership.} In this paper, we explore the question of repudiating a membership claim, i.e., if $x\in D$. However, a complementary question could be when an adversary claims that a data point $x$ is \textit{not} a member, $\x\not \in D$. A real-world example of such a scenario could be when an adversary wants to claim that the model owner  has intentionally excluded some data points, say that belong to the minority population, to bias the training model. In this case, a \PoR~would have to show that the model owner could have obtained $\mf$ from the dataset $D'=D\cup\{\x\}$. Given a \PoL~for the $\langle \m,D \rangle$, we outline a possible strategy for the construction of such a \PoR. First, select a set of random iterations $H\subset [\T]$. Next, the original mini-batch $\B$ for each iteration in $t \in H$ is replaced with a forged mini-batch $\Bf$ such that $\x \in \Bf$. We surmise that as long as $\x$ is not an outlier, the above strategy would generate a valid \PoR. We leave a more detailed exploration of such a \PoR~as a part of future work.

\textbf{Roles of the challenger and adversary.} Recall that a \PoR~is essentially a forged \PoL. Thus, we leverage forgeability, a concept introduced in the context of machine unlearning, to repudiate claims on membership inferencing. Despite relying on the same technical tool (forged \textsf{PoL}s) the roles of the challenger and adversary are essayed by different parties in the two contexts. \begin{comment}We start with \PoL~which was introduced for establishing model ownership.
\begin{defn}[Proof-of-Learning Security Game $\mathcal{G}_\PoL(\cdot)$] The Proof-of-Learning security game $\mathcal{G}_F(\cdot)$ is defined as follows: \begin{enumerate} \item Adversary has a dataset $D$ and a model $\m$ trained on it. \item Challenger selects a point $x__ \in D$ to be deleted and sends it to the adversary. \item Adversary provides a \textit{forged} \PoL~log based on Algorithm \ref{alg: forging attack} \item Challenger verifies the log \end{enumerate}\end{defn}
Recall, the interaction is defined as follows: In the \DF~attack, to generate the proof you need access to the dataset. Hence in the context of machine unlearning, the dataset owner  is actually the adversary who is trying to be unscrupulous with a deletion request. 
However for a membership inferencing, the dataset owner is the challenger who is trying to fend themselves from an adversary carrying out an \MI~attack. Thus the role of the dataset owner is switched in the two contexts. 
We start with the \PoL~. 
\end{comment}

First, we discuss the forging security game.
\begin{defn}[Forging Security Game $\mathcal{G}_F(\cdot)$] The forging security game $\mathcal{G}_F(\cdot)$ is defined as follows: \begin{enumerate} \item Adversary $\calA$ has a dataset $D$ and a model $\m$ trained on it. \item Challenger $\calC$ selects a point $x_{-} \in D$ to be deleted and sends it to the adversary $\calA$. \item Adversary $\calA$ provides a \textit{forged} \PoL~based on Algorithm \ref{alg: forging attack} \item Challenger $\calC$ verifies the \PoL. \end{enumerate}\end{defn}

For the context of machine unlearning, the challenger  has to be somebody who knows a data point in $D$ (in practice, it is typically an individual who has contributed to $D$).   On the other hand,  the model owner is actually the adversary who is trying to be unscrupulous with a deletion request.  Thus using the above attack, the model owner can \textit{claim} to have unlearned for free. 

Now, let's look at the \PoR~security game. 
\begin{defn}[\PoR~Security Game, $\mathcal{G}_{\PoR}(\cdot)$] The \PoR~security game is defined as follows: \begin{enumerate}\item Challenger $\calC$ holds a private dataset $D$ and a model $\m$ trained on it. The model is publicly released.
\item Adversary $\mathcal{A}$ \textit{correctly} predicts that a data point $x^* \in D$, i.e., $\calA(\x,\m,\Psi)=1$, and sends this membership inferencing claim to the challenger $\calC$.
\item Challenger $\calC$ responds with a Proof-of-Repudiation, $\PoR$.
\item Adversary $\calA$ verifies the \PoR.
\end{enumerate}
\end{defn}
As discussed in Section \ref{sec:background:MI}, in this paper we focus on \MI~attacks in practice. Hence, 
 for a membership inferencing, the model owner is the challenger who is trying to fend themselves from an adversary carrying out an \MI~attack. Thus, the role of the model owner is switched in the two contexts. 
%\par Thus, we see that although the undrlying technique for all tree - a log of the tr. Depending on the 
\begin{comment}

\begin{enumerate}\item How to provide a proof -
\begin{enumerate}\item ZKP
\item Use a non-private validation set to generate forge
\item Chosen
\item Efficiency of proof verification
\end{enumerate} \item Highlight that DF is a weaker form of proof, \end{enumerate}

What does it say for

\MI tools ahve been used. Most priorliterature 
unfairness in \MI
We would like our to be a commentary on the practical. also - this is privacy audting tool specifically to identfy the boundaries of - 
error could be attributed to h/w changes

efficient verification - reproduce the

The Training program with correct source f randomness and shit ...should not be 

\par Applications setting who is adversary and who is not

In the doscussion till now, we have consdered. However, it is interesting to note that the can be 

\par 
\end{comment}
\section{Related Work}\label{sec:relatedwork}
Membership inference attacks predict whether a particular data point was used for training a model %The first generation of membership inference attacks is built upon the idea that the confidencve ofthe victim model is different on the trainng and test data points. Prior work explore 
(\cite{homer2008resolving, sankararaman2009genomic,dwork2015robust, dwork2017exposed, shokri2017membership,carlini2019secret, murakonda2020ml,song2020introducing, carlini2021membership,carlini2021extracting,ye2021enhanced}). %The main type of attacks A host of attacks have been proposed 
There is a connection between membership inferencing and other topics in \ML~privacy, such as differential privacy (\cite{carlini2019secret, jagielski2020auditing,humphries2020differentially, nasr2021adversary}), memorization (\cite{feldman2020does, zhang2021understanding}), and over-fitting (\cite{yeom2018privacy}).

%A recent line of work 

\par Machine unlearning is the task to delete a data point from a learned model and approximate machine unlearning  outputs a model that is statistically indistinguishable from the model obtained by deleting and retraining (\cite{cao2015towards,ginart2019making,guo2019certified,schelter2020amnesia,borassi2020sliding,ullah2021machine,neel2021descent,sekhari2021remember,izzo2021approximate,kong2022approximate}). The recently proposed forging attack in \cite{Thudi22} challenges the foundation of machine unlearning by generating a forged \PoL~log which is indistinguishable from the correct one (dataset after deleting). In this paper, we have showed a connection between membership inference attacks and forging attacks. %A recnt line of work has been exploring the 

%Differential privacy can has been proposed as a mitigation for \MI~attacks - provable guarantees.

\begin{comment}\textbf{Differential privacy as a mitigation.} Differential privacy (\DP) is a powerful mathematical framework for privacy-preserving data analysis. A randomized algorithm M satisfies (ε, δ)-differential privacy if, given any two datasets D and D′ that differ in at most one sample, and for any subset R of the output space, we have:
P(M(D) ∈ R) ≤ exp(ε)P(M(D′) ∈ R) + δ. (4)
Under mild assumptions, DP provably protects against a variety of privacy attacks, in particular membership inference (Yeom et al., 2018; Sablayrolles et al., 2019). Abadi et al. (2016) proposed a differentially private version of stochastic gradient descent (SGD), called DP-SGD, to enable dif- ferentially private training of generic ML models. Their analysis has further been refined (Mironov, 2017; Mironov et al., 2019) and has been shown experimentally to be tight (Nasr et al., 2021).
\end{comment}

Concurrent work by Rezaei et al.~\cite{discredit22} has also studied the practical unreliability of \MI~attacks. However, their approach is completely different from our concept of \textsf{PoR}s. They consider an auditing scenario where an auditor carries out an \MI~attack that claims to have a low false positive rate.  Given a list of data points that the attack claimed as members, the auditee (model owner) crafts a "discrediting dataset" consisting of non-members data points that are mis-predicted by the \MI~attack as members (false positive error). This challenges the reliability of the \MI~predictions. 

It is important to note that there is a fundamental difference between prior literature on defenses against \MI~attacks~\cite{Defense1,Nasr18,distillation,MemGuard} and our setting. Defenses are designed to reduce the efficacy (success rate) of \MI~attacks. The key insight is to induce similar model output distribution on both the training (member) and testing (non-member) dataset and is achieved via either training time techniques (such as dropout~\cite{Defense1}, $\ell_2$-norm regularization~\cite{shokri2017membership},  model stacking~\cite{smoothing}, min-max adversary regularization~\cite{Nasr18}, differential privacy \cite{DP1,DP2}, early stopping~\cite{MEntr}, knowledge distillation~\cite{distillation}) and or inference time mechanisms (such as output perturbation \cite{MemGuard}).  However, our setting is completely different -- a \PoR~enables a model owner to discredit a membership inference claim \textit{post-attack}.   The generation of a \PoR~requires \textit{no} modifications for pre-computing the forged mini-batches only to reduce computational cost of generating the \textsf{
PoR}s for all the data points in $D$.  to the model's training or inference pipeline. In other words, a \PoR~makes an \MI~attack ineffective in practice even against a \ML~model that has been already trained or is publicly available.

\section{Conclusion}\label{sec:conclusion}
In this paper, we investigate the reliability of membership inference in practice. Specifically, we study if it is possible for a model owner to refute a membership inference claim by introducing the notion of Proof-of-Repudiation (\PoR). A \PoR~enables a model owner to plausibly deny the membership inference prediction and discredit the predictions of an \MI~attack. Consequently, an adversary cannot ``go-to-court" with the prediction of an \MI~attack. The concrete construction is based on the concept of forgeability introduced by Thudi et al. \cite{Thudi22}. Our empirical evaluations have shown that it is possible to construct valid \textsf{PoR}s~efficiently in practice.

\noindent \textbf{Acknowledgements.} This material is partially based upon work supported by the National Science Foundation under Grant \# 2127309 to the Computing Research Association for the CIFellows Project. Kamalika Chaudhuri and Zhifeng Kong would like to thank NSF under CNS 1804829, ONR under N00014-20-1-2334 and ARO MURI W911NF2110317 for research support.

\bibliographystyle{plain}
\bibliography{references}
\clearpage
\onecolumn
\appendix
\subsection{Notations}
Table \ref{tab:notations} describes the notations used in the paper.

\begin{table}[!h]
\centering 
\caption{Notations used in the paper.}
\label{tab:notations}
\begin{tabular}{ rl } 
\toprule
Notation & Meaning \\ \hline
 $\D$ & Input data distribution\\
 $D$ & Original Dataset \\ 
$D'$ & Forging dataset \\ 
$\mathcal{A}$ & Adversary/attack\\
$\calC$ & Challenger\\
 $\x$ &  \MI~claim made on this point\\
 $\M$& Number of candidate batches\\
 $\R$ & number of \textsf{PoR}s combined \\
 $\T$ & Number of iterations\\
 $\m$ & Original model\\
 $\mf$ & Forged model\\
 $\theta$ & Parameters of the original model\\
 $\theta'$ & Parameters of the forged model\\
 $n$ & Dataset size $|D|$\\
 $\B$ & Original mini-batch for $t$-th iteration\\
  $\Bf$ & Forged mini-batch for $t$-th iteration\\
 $b$ & Mini-batch size\\
 \bottomrule
% \caption{Notation table}
\end{tabular}
\end{table}

\subsection{The Complete PoR~Generation Algorithm}\label{app:algorithm}

In Algorithm \ref{alg: complete}, we present the complete \PoR~generation algorithm for models trained with the Random Horizontal Flip data augmentation and modified SGD with momentum, weight decay, and learning rate scheduling. For each mini-batch $B$, we let $\texttt{Flip}(B)\in\{0,1\}^{|B|}$ be the corresponding binary flag indicating whether each data point has been flipped horizontally at randomly, and $\texttt{Aug}(\texttt{Flip}(B), B)$ be the mini-batch after data augmentation. 

\begin{algorithm*}
    \centering
    \caption{Generation of PoRs (Complete)}
    \label{alg: complete}
    \begin{algorithmic}[1]
        \State \textbf{Inputs:} $D=\{x_1,\cdots,x_n\}$ - Training dataset;  $\T$ - Total number of iterations; $B_*^{(t)}$ - Training mini-batches for the $t$-th iteration where $t\in [\T]$; $\texttt{Flip}(B_*^{(t)})$ - Binary flags indicating whether random horizontal flip augmentation has been applied to the samples in $B_*^{(t)}$; $b$ - Size of each mini-batch, $\left|B_*^{(t)}\right|$; $\theta^{(0)}$ - Initialization parameters.
        \State \textbf{Hyper-parameters:} $\texttt{step\_size}^{(t)}$ - Learning rate at step $t$; $\texttt{momentum}$ - Momentum coefficient; $\texttt{weight\_decay}$ - Weight decay coefficient; $\K$ - Number of splits; $\M$ - Number of candidate mini-batches sampled every iteration for forging.
        \For{$t=1,\cdots,\T$}
            \State Compute gradients and update the original model: \\
            $$\texttt{grad}^{(t)} = \nabla_{\theta} \left(\frac1b\sum_{x\in \texttt{Aug}(\texttt{Flip}(B_*^{(t)}), B_*^{(t)})} \ell(x;\theta^{(t)}) + \texttt{weight\_decay}\cdot\|\theta^{(t)}\|_2^2\right),$$
            $$\theta^{(t)}=\theta^{(t-1)}-\texttt{step\_size}^{(t)}\cdot(\texttt{grad}^{(t)}+\texttt{momentum}\cdot\texttt{velocity}^{(t-1)}).$$
            \State Randomly split $D=D_1^{(t)}\cup\cdots\cup D_\K^{(t)}$ that satisfy Eqs. \eqref{eq: rand split condition 1} and \eqref{eq: rand split condition 2}.
            \For{$k=1,\cdots,\K$}
                \State Randomly select subset $D_{\mathrm{sub}}$ from $D\setminus D_k^{(t)}$.
                \State Randomly select $\M$ mini-batches with size $b$ from $D_{\mathrm{sub}}$: $B_1^{(k,t)},\cdots B_\M^{(k,t)}$.
                \For {$m=1,\cdots,\M$}
                    \State Sample $\texttt{Flip}(B_m^{(k,t)})\in\texttt{Uniform}(\{0,1\}^{b})$.
                \EndFor
                \State Choose the best mini-batch and augmentation variables to approximate the gradient: \\
                $$m_*^{(k,t)} = \arg\min_{m \in [\M]} \left\|\nabla_{\theta} \left(\frac1b\sum_{x\in \texttt{Aug}(\texttt{Flip}(B_m^{(k,t)}), B_m^{(k,t)})}\ell(x;\theta^{(t)})+ \texttt{weight\_decay}\cdot\|\theta^{(t)}\|_2^2\right)-\texttt{grad}^{(t)}\right\|_2^2.$$
                \For{{$x_i\in D_k^{(t)}$}}
                    \If{{$x_i\in B_*^{t}$}}
                        \State {$B_{\texttt{forge}}^{(t)}(x_i) = B_{m_*^{(k,t)}}^{(k,t)}$, $\texttt{Flip}(B_{\texttt{forge}}^{(t)}(x_i)) = \texttt{Flip}(B_{m_*^{(k,t)}}^{(k,t)})$;}
                    \Else 
                        \State {$B_{\texttt{forge}}^{t}(x_i) = B_*^{(t)}$, $\texttt{Flip}(B_{\texttt{forge}}^{(t)}(x_i)) = \texttt{Flip}(B_*^{(t)})$.}
                    \EndIf 
                \EndFor
            \EndFor
        \EndFor
        \State \textbf{Return:} original model $\theta_*=\theta^{(\T)}$; all forged mini-batches $\{B_{\texttt{forge}}^{(t)}(x_i):  i\in [n], t \in [\T]\}$; all augmentation variables $\{\texttt{Flip}(B_{\texttt{forge}}^{(t)}(x_i)):  i\in [n], t \in [\T]\}$.
    \end{algorithmic}
\end{algorithm*}

\subsection{Experimental Configuration and Additional Results}\label{app:eval}

\paragraph{Models}~~\\

For MNIST, we use the standard LeNet5 implementation based on \url{https://github.com/erykml/medium_articles}. It has 61.7K parameters. We use the standard SGD with learning rate $=1\times10^{-2}$ and a batch size of 100 to train the model for 20 epochs.

For CIFAR10, we use the VGG-mini and ResNet-mini implementations based on \url{https://github.com/nikhilbarhate99/Image-Classifiers}. VGG-mini has 5.75M parameters and ResNet-mini has 1.49M parameters. We first use the standard SGD with learning rate $=1\times10^{-2}$ and a batch size of 100 to train the model for 20 epochs. We also use a modified SGD to train for 20 epochs, whose hyper-parameters are the same with additionally a weight decay of $5\times10^{-4}$, momentum of $0.9$, and cosine learning rate scheduler based on \url{https://pytorch.org/docs/stable/generated/torch.optim.lr_scheduler.CosineAnnealingLR.html}.

~~\newline

\paragraph{MI~Attacks}~~\\

We implement the LiRA attack based on \url{https://github.com/tensorflow/privacy/tree/master/research/mi_lira_2021}. For MNIST we train 16 \texttt{cnn32-3-max} shadow models with learning rate $=0.001$ and a batch size of 64 for 20 epochs. For CIFAR10 we train 16 \texttt{wrn28-2} shadow models with learning rate $=0.1$ and a batch size of 256 for 100 epochs. The \MI~score threshold is computed based on maximum accuracy.

We implement the EnhancedMIA attack based on \url{https://github.com/privacytrustlab/ml_privacy_meter/blob/master/tutorials/population_metric.ipynb}. The \MI~score threshold is computed to let the false positive rate $=10\%$.

We implement the MEntr and Xent attacks based on \url{https://github.com/inspire-group/membership-inference-evaluation}. We implement the MIDA attack based on \url{https://github.com/yigitcankaya/augmentation_mia}. The label-specific \MI~score thresholds are computed based on maximum accuracy.

~~\newline

\paragraph{Metrics}~~\\

The \MI~prediction difference metric is measured in three settings: \texttt{Diff}, \texttt{Common}, and \texttt{Validation}. In the \texttt{Common} setting, as it is computationally expensive to look at all points in $D_{-i}$, we look at a random subset instead. The total number of \MI~predictions checked is $5n$. We do the similar in the \texttt{Common} setting. 

The additional \MI~score difference metrics are shown in the box plots below. Outliers are not displayed in the box plots.  

\begin{figure}[!h]
\centering
\includegraphics[trim=0 10 0 0, clip, width=0.4\textwidth]{}
\caption{Box plots of MEntr score differences ($\s$) on MNIST.}
\end{figure}

\begin{figure}[!h]


\centering
\includegraphics[trim=0 10 0 0, clip, width=0.4\textwidth]{fig/MEntr_abs_diff_cifar10.png}
\includegraphics[trim=0 10 0 0, clip, width=0.4\textwidth]{fig/MIDA_abs_diff_cifar10.png}
\caption{Box plots of Mentr (left) and MIDA (right) score differences ($\s$) on CIFAR10 ($\dag$ means the modified SGD is used).}
\end{figure}

\newpage
\subsection{Indistinguishability of Mini-batches}\label{app:indistinguishability}
Here, we describe why standard two-sample statistical tests, such as the Kolmogorov-Smirnov, test are not suitable for distinguishing between the forged mini-batches and the ones obtained from the original training (following standard SGD). The reason behind this is that mini-batches trained under standard SGD do not follow   any fixed distribution and rather depends on the specificities of the exact practical implementation of SGD. For example, if we randomly sample mini-batches with no replacement (i.e. a mini-batch has to contain different samples), the frequency distribution is shown as the yellow curve in Fig. \ref{fig: batch visualization}. On the other hand, if the SGD is implemented as a for loop of the dataloader in PyTorch, then the sample frequencies are uniform, as shown in the green curve in Fig. \ref{fig: batch visualization}. Based on our empirical evaluation, the forged mini-batches have a frequency distribution (shown in the blue curve in Fig. \ref{fig: batch visualization}  which lies in between the two aforementioned curves. Hence, running any two-sample test will almost certainly output a negative result (i.e., the two distributions are not the same). However, the forged mini-batches are very uniform and it is plausible that there exists a specific implementation of SGD -- for example, a for loop implementation corresponding to a distributed training setting -- that yields a similar distribution. Therefore, in this paper, we conjecture that one cannot disprove the validity of any distribution (i.e., the distribution corresponds to a valid SGD training) that lies between the green and yellow curves. 

\begin{figure}[!h]
    \centering
    \includegraphics[width=0.45\textwidth]{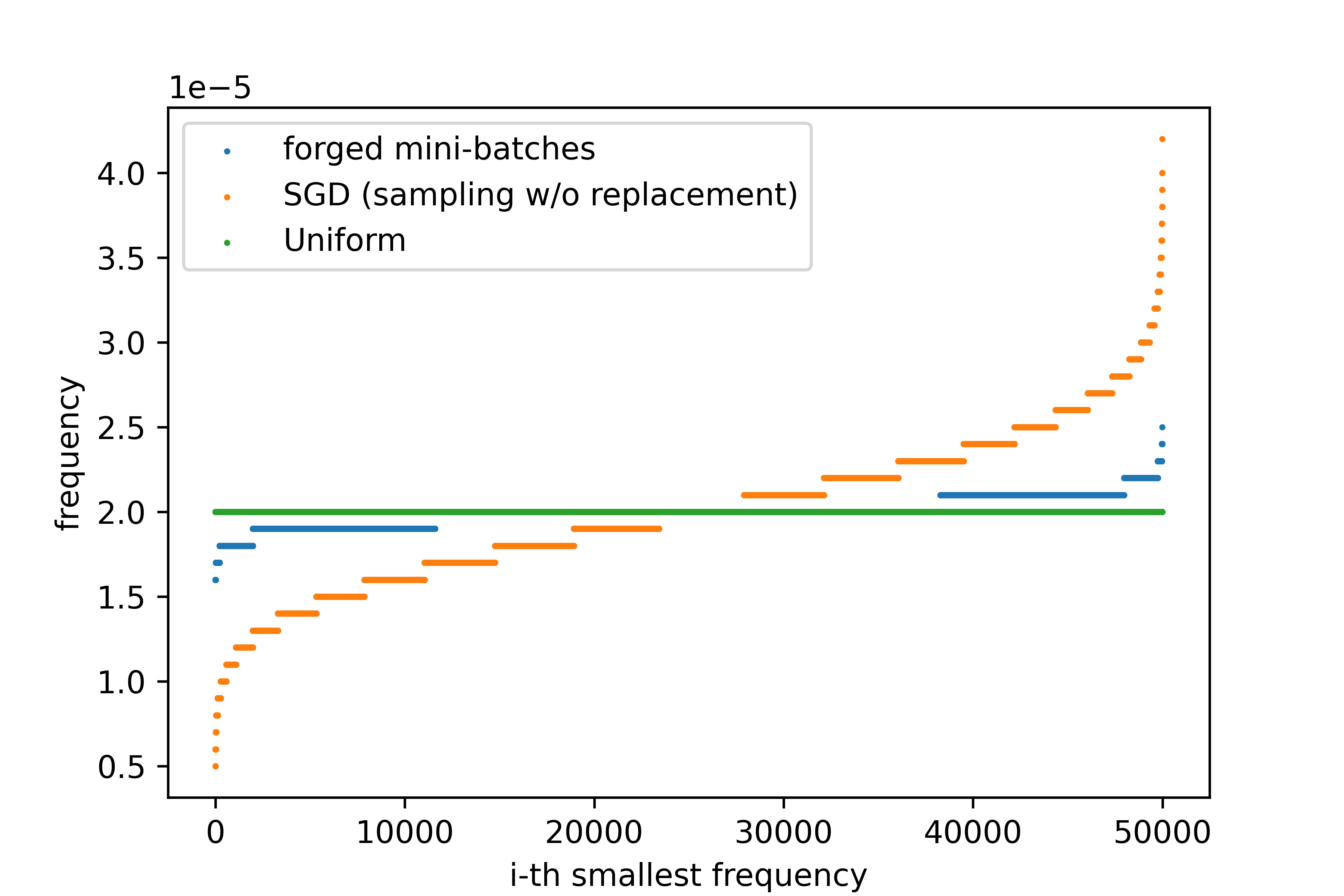}
    \caption{Sorted sample frequencies of mini-batches from various processes.}
    \label{fig: batch visualization}
\end{figure}
\end{document}